\theoremstyle{plain}
\newtheorem{theorem}{Theorem}[section]
\newtheorem{proposition}[theorem]{Proposition}
\theoremstyle{definition}
\theoremstyle{remark}
\renewcommand{\Pr}{\operatorname*{\mathbb{P}}}
\newcommand{\argmax}{\operatorname*{\mathrm{arg\,max}}}
\newcommand{\argmin}{\operatorname*{\mathrm{arg\,min}}}
\newcommand{\D}{\mathcal{D}}
\newcommand{\eps}{\varepsilon}
\newcommand{\bpsi}{\bm{\psi}}
\newcommand{\domain}{\mathcal{X}}
\newcommand{\fatt}[1]{\psi_{#1}}
\newcommand{\mat}[2]{A_{#1#2}}
\newcommand{\attmat}{\bm{\mat{}{}}}
\title{Tight Lower Bounds on Worst-Case Guarantees for Zero-Shot Learning with Attributes}
\author{
  Alessio Mazzetto$^{*}$
    \\
  Brown University \\
   \And
   Cristina Menghini$^{*}$ \\
   Brown University
   \AND
   Andrew Yuan \\
   Brown University 
   \And
   Eli Upfal \\
   Brown University 
   \And
   Stephen H. Bach \\
   Brown University
}
\newcommand\nnfootnote[1]{%
  \begin{NoHyper}
  \renewcommand\thefootnote{}\footnote{#1}%
  \addtocounter{footnote}{-1}%
  \end{NoHyper}
}
\begin{document}

\maketitle
\nnfootnote{$^*$ Equal contribution.}

\begin{abstract}
We develop a rigorous mathematical analysis of zero-shot learning with attributes.
In this setting, the goal is to label novel classes with no training data, only detectors for attributes and a description of how those attributes are correlated with the target classes, called the class-attribute matrix.
We develop the first non-trivial lower bound on the worst-case error of the best map from attributes to classes for this setting, even with perfect attribute detectors.
The lower bound characterizes the theoretical intrinsic difficulty of the zero-shot problem based on the available information---the class-attribute matrix---and the bound is practically computable from it.
Our lower bound is tight, as we show that we can always find a randomized map from attributes to classes whose expected error is upper bounded by the value of the lower bound.
We show that our analysis can be predictive of how standard zero-shot methods behave in practice, including which classes will likely be confused with others.
\end{abstract}

\section{Introduction}
\label{introduction}
Labeled training data is often scarce or unavailable, and it can be very costly to obtain.
For this reason, there is a growing interest in developing methods that can exploit source of information other than labeled data, such as zero-shot learning (ZSL).
In ZSL, we want to recognize items of \textit{unseen classes}, for which labeled data is not available.
A ZSL model is trained on a disjoint set of similar classes, called \textit{seen classes}, for which labeled data is available instead.
The model is trained to map examples to auxiliary information describing the seen classes.
Then, at test time, predictions can be made using only descriptions of the unseen classes.
While ZSL is increasingly common in practice, from a theoretical perspective ZSL is a hard problem that defies analysis, because in the worst case there can be an arbitrary shift between the distributions of the seen and unseen classes.
In this work, we take a step towards a better theoretical understanding of ZSL.
We investigate the question: \emph{Given only auxiliary information in the form of attributes describing unseen classes, what is the smallest worst-case error than {\bf any} method can guarantee?}
We provide the first non-trivial answer to this question by developing a framework based on adversarial optimization.
We also show that this framework has practical application as a method for identifying when the predictions of ZSL methods on certain unseen classes are more likely to be incorrect.

ZSL models have obtained impressive accuracy in practice, both for vision~\citep{xian2018zero} and language domains~\citep{sanh:iclr22,wei:iclr22}, but they come with no theoretical characterization of their accuracy.
To address this gap, we analyze the attribute-based ZSL setting that includes a large portion of the classic methods proposed in the literature \citep{RomeraParedes2015AnES,Lampert2014AttributeBasedCF,Akata2015cvpr,Akata2016}, as well as more recent end-to-end deep learning approaches~\citep{Kodirov2017cvpr,xian2018zero,Huynh-DAZLE:CVPR20}.
While this setting does not include all varieties of ZSL (discussed further in Section~\ref{sec:related-work}), we view this work as a critical first step towards building up a broader theory of ZSL. In attribute-based ZSL, an attribute is a property of a item to be classified.
Each item can either exhibit a given attribute or not.
For example, an image of a lion would often exhibit the attribute tail, while the image of a sheep would not. 
Attribute-based ZSL models are trained using \textit{attribute representations} of the items of the seen classes, and a \textit{class-attribute} matrix that describes pairwise relations between the seen classes and each attribute.
At test time, predictions are made for the unseen classes given the items' attribute representation and a new class-attribute matrix.

\citet{RomeraParedes2015AnES} is one of the few works to address theoretical questions related to ZSL.
Studying attribute-based ZSL, they show a pair of basic bounds that characterize sufficient conditions for either learning or impossibility:
(1) if there is no shift from the seen to the unseen classes, then learning is trivial, and (2) if the vectors of attributes of the seen classes are mutually orthogonal with those of the unseen classes, then the error can be arbitrarily large in the worst case.
In this paper, we provide the first non-trivial lower bound for ZSL with attributes, addressing the open problem posed by~\citet{RomeraParedes2015AnES}.

We analyze ZSL with attributes by first observing that it is a two stage process consisting of a training phase and an inference phase.
In the training phase, we learn a map from the items to the attribute space using the seen classes, while in the inference phase we use the class-attribute matrix to infer the correct class given the item-attribute representation.
Based on this two-stage decomposition, we can identify two kinds of errors. 
The first kind, related to the training phase, is due to domain shift.
The map from items to the attribute space that is trained on the seen classes might not generalize accurately to the unseen classes.
This contribution to the error can be arbitrarily large without introducing strong assumptions, as no labeled data is available for the unseen classes.
Thus it is impossible to characterize the domain shift between seen and unseen classes.
The second kind, related to the inference phase, is due to the fact that the class-attribute matrix might not fully differentiate among the unseen classes. 
In particular, there can be an item of the unseen classes with a set of attributes that according to the class-attribute matrix relation conforms with the description of two different classes.
The first kind of error, domain shift, has been extensively studied both in the theory and the experimental literature \citep{mansour2009domain,ben2010theory,Sener2016LearningTR,Pinheiro2018UnsupervisedDA,Luo2019TakingAC}.
In this work, for the first time, we theoretically characterize the contribution of the second kind of error. It is important to understand and characterize this error for specific ZSL tasks because it corresponds to an inherent information gap in the problem setting that cannot be circumvented with a smarter algorithm.

We provide tight lower and upper bounds on the worst-case error of the best map from attribute representations to classes based on the class-attribute matrix.
Our analysis gives a lower bound in the sense that it bounds from below the minimum error that any method can guarantee given only the information of the class-attribute matrix. 
The class-attribute matrix specifies the fraction of items in each class that exhibit each attribute. There is a range of class-attribute distributions that satisfy the constrains defined by a given matrix. We give a lower bound on the error of the best possible method for the worst case distribution in that range.
This distribution represents a worst case correlation between attributes that satisfies the class-attribute matrix while maximizing the difficulty to distinguish  between attribute-representation of items belonging to distinct classes. 
Our analysis also gives an upper bound in the sense that we show a randomized classifier that achieves at most the error of the lower bound, assuming perfect item-to-attribute mapping.
This also shows that the lower bound is tight.
Interestingly, the value of the lower bound can also be interpreted as the quality of the information provided by the class-attribute matrix.
To the best of our knowledge, this is the first work to quantify such information.

\textbf{Contributions.}
Our main contributions are the following:
\begin{compactenum}
    \item We show the first non-trivial lower bound for attribute-based ZSL 
    (\Cref{sec:adv}).
    \item We formulate the lower bound given a class-attribute matrix as a linear program (\Cref{sec:compute_bound}).
    \item We show a closed form expression for the lower bound for binary classification (\Cref{binary-exact-computation}).
    \item We show that the lower bound is tight: we exhibit a randomized classifier 
    whose expected error is upper bounded by the value of the lower bound (\Cref{subsection:tight}).
    \item We run extensive experiments comparing the theoretical results with the error of popular attribute-based ZSL methods, on benchmark datasets. We show that information given by the bound can be predictive of how standard methods behave, including which classes will likely be confused with others (\Cref{sec:experiments}).
\end{compactenum}

\section{Background and Related Work}
\label{sec:related-work}

Much early work on ZSL focused on using logical descriptions of the classes as auxiliary information, including attributes~\citep{chang:aaai08, lampert2009learning}.
Since then, an increasing number of ZSL methods have been proposed, which differ in methodology and the auxiliary information they use.
Examples of auxiliary information are symbolic descriptions of classes~\citep{chang:aaai08, lampert2009learning}, pre-trained embedding of the classes~\citep{frome:neurips13}, natural language descriptions~\citep{obeidat:naacl19,brown:neurips20}, and knowledge graphs \citep{wang:cvpr18,kampffmeyer:cvpr19,nayak:arxiv20}.
Recent ZSL methods can be grouped into two main categories: embedding-based and generation-based~\citep{pourpanah:survey2020}.
Seminal embedding-based works used two-layer neural networks to link the image feature space to the semantic one~\citep{Socher2013ZeroShotLT}.
Later, they evolved into deep neural networks that either map semantic features into the visual space~\citep{Ba2015PredictingDZ,Zhang2017LearningAD,Changpinyo2017PredictingVE} or project both the image and semantic features into the same space~\citep{zhang:cvpr15,radford:icml21}.
Generative-based approaches employ various kind of Generative Adversarial Networks (GANs)~\citep{Mirza2014ConditionalGA} to synthesize the features of the unseen classes, and use them to train a ZSL classifier in a supervised fashion~\citep{Felix2018MultimodalCG,Li2019LeveragingTI,Xian2018FeatureGN,Xian2019FVAEGAND2AF,Narayan2020LatentEF}.

ZSL with attributes generally consists of learning a linear map from the item to the attribute space, in the first stage. 
Then, we use the class-attribute matrix to infer the correct class given the item-attribute representation~\citep{xian2018zero}.
ZSL with attributes can be seen as a special case of embedding-based ZSL, in which the class embeddings are the rows of the class-attribute matrix.
Analyzing more general embedding-based or generation-based ZSL methods is challenging because they rely on deep neural networks for which relatively little theory is available.



Inspired by previous work to describe classes using error-correcting output codes~\citep{dietterich1994solving},~\citet{palatucci2009zero} were the first to propose a ZSL algorithm for which they can provide a theoretical analysis.
The algorithm learns linear classifiers individually for each binary attribute, and the attributes are mapped to the closest class-attribute representation.
While they are able to provide a PAC bound, their analysis relies on several strong assumptions that limit the problem setting.
First, they assume that they can learn each attribute independently, but attribute dependency is a widely recognized problem for attribute detection \citep{Jakulin2003AnalyzingAD}.
Second, they assume that each class has a unique attribute representation, i.e. each attribute must be either present or not in all the items of a given class.
Finally, they also assume that they are able to sample classes from a given distribution, and they are able to generalize to the non-sampled classes. That is, they do not separate beforehand between seen and unseen classes, which is the common scenario observed in ZSL settings.
Conversely, our lower bound does not assume a unique binary representation for each class, as we are given a class-attribute matrix that provides the probabilities to observe an attribute given an item of a class. 
Also, our lower bound takes into account the possible correlation between attributes, and it is computed based on the information provided on the given unseen classes. 

In more recent work, \citet{RomeraParedes2015AnES} draw a connection between transfer learning \citep{ben2010theory} and ZSL to provide a novel theoretical result.
In particular, they show that their model is not able to generalize if the attribute representations of the seen classes are orthogonal to the one of the unseen classes.
Intuitively, if those representations are orthogonal, the attribute map learned for the seen classes would fail to provide information for the unseen classes.
This is an impossibility result, and it is not able to arbitrarily quantify the information given for the unseen classes.
Unfortunately, transfer learning or domain adaptation like-bounds are challenging to estimate in a ZSL setting.
In fact, a term of those bounds require access to labeled data for the unseen classes, which is unavailable in ZSL. 
Another term, the discrepancy, depends on the difference between the attribute representations of the classes and the distribution of the items between seen and unseen.
While it would be theoretically possible to compute the discrepancy based on the information available, its computation is very challenging and it has been possible only in very specific cases~\citep{mansour2009domain}. 

Our novel lower bound is developed using adversarial techniques that describe the worst-case scenario with respect to the information available. It is inspired by recent work on semi-supervised learning, where the goal is to use the information provided by weak supervision sources~\citep{balsubramani2015optimally, arachie2021general, mazzetto2021semi, mazzetto2021adversarial}. The adversarial approach allows us to handle the possible dependencies between the attributes.

\section{Preliminaries}
\label{preliminaries}

We denote scalar and generic items using lowercase letters, vectors using lowercase bold letters, and matrices using bold uppercase letters. Given two vectors $\bm{v}$ and $\bm{v}'$, we denote with $\bm{v}\bm{v}'$ the concatenation of the two vectors. For any $n \in \mathbb{N}$, we denote with $[n]$ the set $\{ 1, \ldots, n\}$. Due to space constraints, all proofs are deferred to the appendix.

Let $\D$ be a distribution defined over the \textit{classification domain} $\domain$. A \textit{multiclass classification task} is specified by a \textit{labeling function} $y: \domain \rightarrow \mathcal{Y} = [k]$ that maps each \textit{item} $x \in \domain$ to a class $j$ in the label space $\mathcal{Y}$, where $k \geq 2$. We say that a multiclass classification task is \textit{balanced} if for each $j \in [k]$, it holds that $\Pr_{x \sim \mathcal{D}}[ y(x) = j] = 1/k$. Unless otherwise stated, we assume that the classification task is balanced.
This assumption is not restrictive, and as we will observe later, it can be changed if a different prior is known on the class probabilities. We will show that our lower bound holds even if we do not assume balanced classes.
We also assume to have access to $n$ \textit{attribute functions} $\fatt{1},\ldots,\fatt{n}$, where $\fatt{i}: \domain \rightarrow \{0,1\}$ for $i \in [n]$. We say that a classification item $x \in \domain$ has attribute $i \in [n]$ if $\fatt{i}(x) = 1$. For ease of notation, we define $\bpsi(x) \doteq ( \fatt{1}(x), \ldots, \fatt{n}(x))^T$. The codomain of $\bpsi$ is $\{0,1\}^n$, and it is referred to as \textit{attribute space}.
 All the information about the target unseen classes available to the algorithm is encoded in a \textit{class-attribute matrix} $\attmat \in [0,1]^{k \times n}$. The matrix provides information on the relations between classes and attributes.  In particular for a class $j \in [k]$, and an attribute $i \in [n]$,  $A_{j,i}$ is the probability that $\fatt{i}(x) = 1$ given that $y(x) = j$, i.e.,
\begin{align}
\label{matrix-class-attribute-relation}
    A_{j,i} = \Pr_{x \sim \mathcal{D}}[ \psi_i(x) = 1 | y(x) = j] \enspace .
\end{align}
An \textit{attribute-class classifier} $g$ is a map from vectors in the attribute space to classes, i.e., $g : \{0,1\}^n \rightarrow [k]$. The error of $g$ is $
    \eps( g ) \doteq \Pr_{x \sim \mathcal{D}}[g \circ \bpsi(x) \neq y(x)]$.
Let $\mathcal{G}$ be a collection of all the possible deterministic maps $\{0,1\}^n \rightarrow [k]$ from the attribute space to the $k$ classes. We are interested in evaluating $\min_{g \in \mathcal{G}} \eps(g)$.
As we focus on the contribution of the information provided by the class-attribute matrix, we assume access to the attribute functions $\psi_1,\ldots,\psi_n$. In practice, the map to the attribute space is learned on the available labeled data for the seen classes \citep{Lampert2014AttributeBasedCF,RomeraParedes2015AnES}, 
and it is likely noisy, and can be cause of additional error.



Let $p^*$ be the (unknown) probability mass function (PMF) of the random vector $(\psi_1(x), \ldots, \psi_n(x), y(x))$ where $x \sim \mathcal{D}$. The support of $p^*$ is $\{0,1\}^n \times [k]$. For $\bm{v} \in \{0,1\}^n$, and $j \in [k]$, let
    $p^*(\bm{v},j) \doteq \Pr_{x \sim \mathcal{D}}[ \bpsi(x) = \bm{v} \land y(x) = j]$.
The error of $g$ is a function of $p^*$:
\begin{align}
\label{error-of-a-distribution}
    \eps(g) = \eps(g,p^*) \doteq 1 - \sum_{\bm{v}\in \{0,1\}^n} p^*(\bm{v},g(\bm{v}))
\end{align}
A function $g^* \in \mathcal{G}$ that attains minimum error $\eps(g^*) = \min_{g \in \mathcal{G}}\eps(g)$ is a Bayes optimal classifier with respect to $p^*$, i.e. for each $\bm{v} \in \{0,1\}^n$, we have that $g^*(\bm{v}) = \argmax_{j \in [k]} p^*(\bm{v},j)$. Thus, 
\begin{align}
\label{optimal-minimum}
    \min_{g \in \mathcal{G}} \eps(g) = 1 -  \sum_{\bm{v}\in \{0,1\}^n} \max_{j \in [k]} p^*(\bm{v},j) \enspace .
\end{align}
We do not have access to labeled data for the unseen classes, so we cannot estimate $p^*$.
Instead, we construct a lower bound with respect to the set of all distributions that fit the available information.

\section{Lower Bounds for Zero-Shot Learning with Attributes}\label{sec:adv}
In this section, we formally define our lower bound. Consider a PMF $p$ with support over $\{0,1\}^n \times [k]$. We say that $p$ satisfies the class-attribute matrix $\bm{A}$ if (as constraints \eqref{matrix-class-attribute-relation}) for each $i \in [n]$ and $j \in [k]$, 
\begin{align}
\label{matrix-constraint}
    \sum_{\substack{\bm{v} \in \{0,1\}^n : \\v_i = 1} }p(\bm{v},j) = A_{j,i} \sum_{\bm{v} \in \{0,1\}^n} p(\bm{v},j) \enspace .
\end{align}
Recall that $p$ is balanced if for each $j \in [k]$, it holds that $\sum_{\bm{v}} p(\bm{v},j) = 1/k$. 
Let $\mathcal{P}({\bm{A}})$ be the set of all possible PMFs $p$  with support over $\{0,1\}^n \times [k]$ that satisfy  \eqref{matrix-constraint} and are balanced.  Clearly, the unknown true distribution, $p^* \in \mathcal{P}(\bm{A})$. 
The set $\mathcal{P}({\bm{A}})$ can be interpreted as the collection of all the PMFs of the random vector $(\psi_1, \ldots, \psi_n, y)$ that satisfy the constraints imposed by the information available on the prediction task and on the attribute functions. While the matrix $\bm{A}$ provides precise information on the correlation between any pair of attribute function and class, it fails to provide information on the correlation between attribute functions, i.e., it does not fully specify the distribution $p^*$. Without additional information, any PMF in $\mathcal{P}(\bm{A})$ could be equal to $p^*$.
Similarly to $\eqref{error-of-a-distribution}$, given a PMF $p \in \mathcal{P}(\bm{A})$ and an attribute-class classifier $g \in \mathcal{G}$, we can define the error of $g$ with respect to the distribution $p$ as
\begin{align}
\label{error-distribution-general}
    \eps(g,p) \doteq 1 - \sum_{\bm{v}\in \{0,1\}^n}  p(\bm{v},g(\bm{v})) \enspace .
\end{align}
Following the computation in $\eqref{optimal-minimum}$, the error of the best map from attributes to classes with respect to $p \in \mathcal{P}(\bm{A})$ is computed as
\begin{align}
\label{Q-computation}
    Q(p) \doteq   \min_{g \in \mathcal{G}}\eps(g,p) = 1 -  \sum_{\bm{v}\in \{0,1\}^n} \max_{j \in [k]} p(\bm{v},j) \enspace .
\end{align}
We are interested in the quantity
\begin{align}
\label{adversarial-quantity}
    Q \doteq \max_{p \in \mathcal{P}(\bm{A})}Q(p)   
\end{align}
i.e., $Q$ is the maximum over all distributions $p \in \mathcal{P}(\bm{A})$ of the error of the best algorithm for distribution $p$.
In other words, $Q$ is the worst Bayes error with respect to all the distributions that satisfy the constraints imposed by the class-attribute matrix and on the class probabilities.
Since $p^*$ can be any vector in $\mathcal{P}(\bm{A})$, the value $Q$ represents a lower bound to the best error rate that an algorithm can guarantee.
In fact, without  further information on the attribute functions or the prediction task, it is possible that $p^*$ attains the maximum of $\eqref{adversarial-quantity}$, that is in the worst-case we have that
\begin{align*}
    \eps(g,p^*) = \eps(g) \geq Q  \hspace {10pt} \forall g \in \mathcal{G} \hspace{5pt}
\end{align*}
In other words, the quantity $Q$ reflects a worst-case scenario where the attribute functions are correlated in such a way that it is 
hard to distinguish between the classes, even if the attribute functions still  satisfy the constraints $\eqref{matrix-class-attribute-relation}$ given by the class-attribute matrix $\bm{A}$.
In \cref{subsection:tight}, we show that this lower bound is tight. In particular, we prove that there exists a randomized classifier from the attribute space $\{0,1\}^n$ to the classes $[k]$ whose expected error is at most $Q$ with respect to any distribution $p \in \mathcal{P}(\bm{A})$. 

\textbf{Example.} Consider a balanced binary classification task with two attributes. The class-attribute matrix  $\bm{A} \in \mathbb{R}^{2\times 2}$ is such that $A_{i,j}=1/2$ for $i,j \in \{1,2\}$. Based on this class-attribute matrix, we consider two different scenario. In the first scenario (\textit{best-case}), we have that items from the first class have either both attributes or none with probability $1/2$, and items from the second class have only either the first attribute  or the second attribute with probability $1/2$. In this case, we can simply count the number of attributes that an item has to assign it to the correct class. In the second scenario (\textit{worst-case}), each item has either both attributes or none with probability $1/2$ independently from the item class. In this case, any mapping from the attributes to the class is going to incur an error of $1/2$.

\subsection{Computing the Lower Bound}\label{sec:compute_bound}
In this subsection, we show how to compute $Q$ as in $\eqref{adversarial-quantity}$ through a Linear Program (LP). To describe a generic PMF $p$, we introduce $2^n \times k$ variables $q_{\bm{v},j}$ with $\bm{v} \in \{ 0, 1\}^n$ and $j \in [k]$. We use additional $2^n$ auxiliary variables $\lambda_{\bm{v}}$, for $\bm{v} \in \{0,1\}^n$, to denote the maximums of \eqref{Q-computation}, i.e. $\lambda_{\bm{v}} = \max_{j \in [k]} q_{\bm{v},j}$. The LP is formulated as follows.
\begin{align}
    \label{lp-maximum}
    &1-Q  = \min \sum_{\bm{v}} \lambda_{\bm{v}}& \\
    &(a)  \sum_{\substack{ \bm{v} \in \{0,1\}^n :\\ v_i = 1}}q_{\bm{v},j} = A_{j,i}\sum_{\bm{v} \in \{0,1\}^n}q_{\bm{v},j}  & \forall j \in [k], i \in [n] \nonumber  \\
    &(b)  \sum_{\substack{ \bm{v} \in \{0,1\}^n }}q_{\bm{v},j} = \frac{1}{k} & \forall j \in [k] \nonumber  \\
    &(c) \hspace{5pt} \lambda_{\bm{v}} \geq  q_{\bm{v},j} \geq 0 & \forall \bm{v} \in \{0,1\}^n, j \in [k] \nonumber 
\end{align}    

\begin{theorem}
\label{lp-theorem}
The optimal value of the LP \eqref{lp-maximum} is equal to $1-Q$, with $Q$ is as in \eqref{Q-computation}.
\end{theorem}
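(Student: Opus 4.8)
The plan is to show the two optimization problems are equivalent by exhibiting a bijection between their feasible regions that preserves (the relevant part of) the objective. First I would observe that the variables $q_{\bm{v},j}$ together with constraints $(a)$ and $(b)$, plus the nonnegativity half of $(c)$, describe exactly the set $\mathcal{P}(\bm{A})$: constraint $(a)$ is a verbatim rewriting of the class-attribute constraint \eqref{matrix-constraint} with $\sum_{\bm{v}}q_{\bm{v},j}$ in place of $\sum_{\bm{v}}p(\bm{v},j)$, constraint $(b)$ is the balancedness requirement $\sum_{\bm{v}}p(\bm{v},j)=1/k$, and together $(b)$ forces $\sum_{\bm{v},j}q_{\bm{v},j}=1$ so that $q$ is a genuine PMF on $\{0,1\}^n\times[k]$. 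Hence any feasible $(q,\lambda)$ yields a $p\in\mathcal{P}(\bm{A})$ via $p(\bm{v},j)=q_{\bm{v},j}$, and conversely.

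Next I would handle the auxiliary variables $\lambda_{\bm{v}}$. The key point is that at optimality we may assume $\lambda_{\bm{v}}=\max_{j\in[k]}q_{\bm{v},j}$. One direction is immediate: constraint $(c)$ forces $\lambda_{\bm{v}}\geq q_{\bm{v},j}$ for all $j$, hence $\lambda_{\bm{v}}\geq \max_j q_{\bm{v},j}$, so the objective $\sum_{\bm{v}}\lambda_{\bm{v}}$ is at least $\sum_{\bm{v}}\max_j q_{\bm{v},j}=1-Q(p)$ for the corresponding $p$. For the other direction, given any $p\in\mathcal{P}(\bm{A})$, set $q_{\bm{v},j}=p(\bm{v},j)$ and $\lambda_{\bm{v}}=\max_j q_{\bm{v},j}$; this is feasible for \eqref{lp-maximum} and achieves objective value $1-Q(p)$. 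Taking the minimum of the LP objective over all feasible points and comparing with $\max_{p}Q(p)$ — equivalently $\min_p (1-Q(p))$ — gives that the LP optimum equals $\min_{p\in\mathcal{P}(\bm{A})}(1-Q(p)) = 1-\max_{p\in\mathcal{P}(\bm{A})}Q(p) = 1-Q$, where the last equality uses the definition \eqref{adversarial-quantity} and recalls that $Q(p)=1-\sum_{\bm{v}}\max_j p(\bm{v},j)$ from \eqref{Q-computation}.

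Assembling these pieces, I would argue as follows. Let $\mathrm{OPT}$ denote the LP optimum. For the inequality $\mathrm{OPT}\leq 1-Q$: pick $p^\sharp$ attaining $\max_{p\in\mathcal{P}(\bm{A})}Q(p)$ (the max is attained since $\mathcal{P}(\bm{A})$ is a nonempty compact polytope and $Q(\cdot)$ is continuous), build the feasible LP point as above, and its objective is $1-Q(p^\sharp)=1-Q$. For the reverse inequality $\mathrm{OPT}\geq 1-Q$: take any LP-feasible $(q,\lambda)$, let $p$ be the induced PMF in $\mathcal{P}(\bm{A})$, and use $\sum_{\bm{v}}\lambda_{\bm{v}}\geq\sum_{\bm{v}}\max_j q_{\bm{v},j}=1-Q(p)\geq 1-Q$; minimizing the left side over feasible points yields $\mathrm{OPT}\geq 1-Q$.

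I do not expect a serious obstacle here; the statement is essentially a bookkeeping verification that the LP is a faithful encoding. The one place to be careful is the treatment of the $\lambda_{\bm{v}}$ variables — making explicit that although the LP does not force $\lambda_{\bm{v}}$ to equal the max, the minimization does, so no feasible point with a strictly larger $\lambda_{\bm{v}}$ can be optimal — and noting that $\mathcal{P}(\bm{A})$ is nonempty (it contains $p^*$, as stated in the excerpt) so all the optima are well-defined.
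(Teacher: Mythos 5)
Your proposal is correct and follows essentially the same route as the paper's proof: identify LP-feasible points with PMFs in $\mathcal{P}(\bm{A})$ via constraints $(a)$, $(b)$, and nonnegativity, note that at optimality $\lambda_{\bm{v}}=\max_j q_{\bm{v},j}$, and conclude by the two inequalities (the paper phrases the second inequality as a proof by contradiction, but the content is identical). Your explicit remarks on attainment of the maximum and nonemptiness of $\mathcal{P}(\bm{A})$ are fine additions but do not change the argument.
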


By removing or modifying constraint $(b)$ of the LP, it is possible to remove the assumption that the classes are balanced or provide different class weights. All the previous results still hold by changing the definition of $\mathcal{P}(\bm{A})$ accordingly. It is important to point out that since we are computing a worst-case lower bound, the class weights provide significant information. Without constraints on the class weights, the worst-case distribution could concentrate all the probability mass on few classes that are hard to differentiate using the available class-attribute matrix $\bm{A}$. 

The LP has $O(k \cdot 2^n)$ variables and constraints, and therefore it is computationally expensive for large number of attributes. The dependency on $2^n$ is required to describe all the possible correlations between the output of the $n$ attribute functions. Nevertheless, we  present an efficient computation for the binary case in the next subsection, and an efficient approximation for the general case in \Cref{app:approximation-subsection}.

\subsection{Lower Bound for Binary Classification}
\label{binary-exact-computation}
In this subsection, we show how to efficiently compute $Q$ as in \eqref{adversarial-quantity} in the case of a binary classification task, i.e. $k=2$ and $\bm{A} = [0,1]^{2 \times n}$. For ease of notation, let
\begin{align}
\label{binary-matrix}
    \bm{A} =      \begin{bmatrix}
    \alpha_1 & \ldots & \alpha_n \\
    \beta_1 & \ldots & \beta_n 
  \end{bmatrix}  \enspace .
\end{align}

\begin{theorem}
\label{binary-adversarial-computation}
Consider a balanced binary classification task and let $\bm{A}$ be as in \eqref{binary-matrix}. Let $Q$ be as in \eqref{adversarial-quantity}. It holds that
    $Q = \frac{1}{2}\left(1 - \max_{i \in [n]}| \beta_i - \alpha_i|\right)$. Moreover, let $g_a$ be the attribute-class classifier
\begin{align*}
    g_a(\bm{v}) = 1+\begin{cases}
    v_{i^*} &\mbox{ if } \alpha_{i^*} < \beta_{i^*} \\
    1 - v_{i^*} &\mbox{ if } \alpha_{i^*} \geq \beta_{i^*}
    \end{cases}
\end{align*}
for each $\bm{v} \in \{0,1\}^n$, where $i^* = \argmax_{i}|\beta_i - \alpha_i|$ and $v_i$ is the $i$-th component of the vector $\bm{v}$. Then
    $\eps(g_a,p) = Q$ for all  ${p} \in \mathcal{P}(\bm{A})$,
i.e. the lower bound $Q$ is tight.
\end{theorem}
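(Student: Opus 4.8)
The plan is to first determine the error of the proposed classifier $g_a$ on \emph{every} feasible distribution --- which already gives the upper bound on $Q$ --- and then to exhibit a single feasible distribution that matches it, so that $g_a$ simultaneously certifies tightness. Throughout write $m \doteq \max_{i\in[n]}|\beta_i-\alpha_i|$ and let $i^*$ attain the maximum. Note $\mathcal{P}(\bm A)\neq\emptyset$ since the product distribution $p(\bm v,j)=\tfrac12\prod_i A_{j,i}^{v_i}(1-A_{j,i})^{1-v_i}$ is balanced and satisfies \eqref{matrix-constraint}.

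\emph{Step 1 (error of $g_a$).} Fix any $p\in\mathcal{P}(\bm A)$ and set $p_j(\bm v)\doteq 2p(\bm v,j)$; by balancedness and \eqref{matrix-constraint}, $\Pr_{p_1}[v_i=1]=\alpha_i$ and $\Pr_{p_2}[v_i=1]=\beta_i$ for every $i$. Since $g_a$ depends on $\bm v$ only through $v_{i^*}$, the sum $\sum_{\bm v}p(\bm v,g_a(\bm v))$ splits into $\sum_{v_{i^*}=1}p(\bm v,1)+\sum_{v_{i^*}=0}p(\bm v,2)$ (or the analogous split with the roles of the classes swapped, when $\alpha_{i^*}<\beta_{i^*}$), and each piece is evaluated using only the coordinate-$i^*$ marginal: a two-line computation gives $\eps(g_a,p)=\tfrac12\bigl(1-|\alpha_{i^*}-\beta_{i^*}|\bigr)=\tfrac12(1-m)$ in both sign cases. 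Since this holds for all $p\in\mathcal{P}(\bm A)$, we get $Q=\max_{p}\min_{g}\eps(g,p)\le\max_{p}\eps(g_a,p)=\tfrac12(1-m)$.

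\emph{Step 2 (reduction to total variation).} Applying $\max(a,b)=\tfrac12(a+b+|a-b|)$ to \eqref{Q-computation} for a balanced binary $p$ with class-conditionals $p_1,p_2$ yields $Q(p)=\tfrac12\bigl(1-\DTV(p_1,p_2)\bigr)$, so maximizing $Q(p)$ over $\mathcal{P}(\bm A)$ is the same as minimizing $\DTV(p_1,p_2)$ over all pairs of PMFs on $\{0,1\}^n$ with coordinate marginals $\bm\alpha$ and $\bm\beta$. The test set $\{\bm v:v_i=1\}$ gives $\DTV(p_1,p_2)\ge|\alpha_i-\beta_i|$ for each $i$, hence $\ge m$ (re-deriving Step 1). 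It remains to build a matching pair.

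\emph{Step 3 (matching feasible distribution).} If $m=1$ the bound reads $Q\le 0$, matched since every $p\in\mathcal{P}(\bm A)$ has $Q(p)\ge0$. Assume $m<1$ and look for $p_1,p_2$ with the prescribed marginals sharing a common component: $p_1=(1-m)\mu+m\,\nu_1$, $p_2=(1-m)\mu+m\,\nu_2$ with $\mu,\nu_1,\nu_2$ product distributions on $\{0,1\}^n$. Matching marginals forces $\nu_{1,i}=(\alpha_i-(1-m)\mu_i)/m$ and $\nu_{2,i}=(\beta_i-(1-m)\mu_i)/m$, and requiring $\mu_i,\nu_{1,i},\nu_{2,i}\in[0,1]$ is equivalent to $\mu_i\in\bigl[\max\{0,\tfrac{\alpha_i-m}{1-m},\tfrac{\beta_i-m}{1-m}\},\ \min\{1,\tfrac{\alpha_i}{1-m},\tfrac{\beta_i}{1-m}\}\bigr]$; each lower-vs-upper comparison reduces to one of $|\alpha_i-\beta_i|\le m$, $m\le1$, $\alpha_i\le1$, $0\le\alpha_i$ (or its $\beta$-analogue), so the interval is nonempty --- pick any $\mu_i$ in it. Since $p_1-p_2=m(\nu_1-\nu_2)$ we get $\DTV(p_1,p_2)=m\,\DTV(\nu_1,\nu_2)\le m$, which with Step 2 forces $\DTV(p_1,p_2)=m$. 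Thus $p_0$ defined by $p_0(\bm v,j)=\tfrac12 p_j(\bm v)$ lies in $\mathcal{P}(\bm A)$ with $Q(p_0)=\tfrac12(1-m)$, so $Q\ge\tfrac12(1-m)$. Combining, $Q=\tfrac12(1-\max_i|\beta_i-\alpha_i|)$, and Step 1 already showed $\eps(g_a,p)=\tfrac12(1-m)=Q$ for every $p\in\mathcal{P}(\bm A)$, proving tightness. The only real work is Step 3: recognizing that routing the entire marginal discrepancy through the shared component $\mu$ caps the total variation at $m$, and checking the interval for $\mu_i$ is nonempty --- the rest is bookkeeping.
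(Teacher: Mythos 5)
Your proof is correct, and it takes a genuinely different route from the paper's. The paper proves the lower bound $Q \ge \tfrac12(1-m)$ by an induction on the number of attributes: it defines the worst-case distribution $p^{(i)}$ restricted to the first $i$ attributes, establishes an auxiliary proposition characterizing Bayes-optimality of the single-attribute rule via sign conditions $\p{i}{1\bv,1}\ge\p{i}{1\bv,2}$ and $\p{i}{0\bv,1}\le\p{i}{0\bv,2}$, and then extends $p^{(i-1)}$ to $p^{(i)}$ by an explicit three-case construction depending on the ordering of $\alpha_1,\beta_1,\alpha_i,\beta_i$. You instead observe that for a balanced binary task $Q(p)=\tfrac12\bigl(1-\DTV(p_1,p_2)\bigr)$, which converts the adversarial maximization into minimizing total variation over pairs of distributions on $\{0,1\}^n$ with prescribed coordinate marginals $\bm\alpha,\bm\beta$; the test-set bound gives $\DTV\ge m$, and the shared-component mixture $p_j=(1-m)\mu+m\,\nu_j$ with product factors caps it at $m$ in one stroke. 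Your route is shorter, avoids the case analysis entirely, and makes the information-theoretic content of the result (worst case equals minimal distinguishability under marginal constraints) explicit; the paper's route is more laborious but produces the worst-case distribution incrementally in a form that the authors reuse (e.g., the structure exploited in \cref{property-induction}). Two small points to tidy: your formulas $\nu_{j,i}=(\cdot)/m$ are undefined when $m=0$, though that case is immediate (take $p_1=p_2=\mu$ with $\mu_i=\alpha_i=\beta_i$); and it is worth stating explicitly, as you implicitly use, that the map $p\mapsto(p_1,p_2)$ is a bijection between $\mathcal{P}(\bm A)$ and pairs of PMFs on $\{0,1\}^n$ with the prescribed marginals, so that the constructed pair really does yield an element of $\mathcal{P}(\bm A)$.
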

The theorem shows that in the worst-case, the attributes could be correlated in such a way that it is not possible to do better than deciding solely based on the attribute with the largest gap between its probabilities in the  two classes. This result also formally proves that for binary classification, the worst-case is determined by a single attribute, and there is no compounded benefit in having multiple attributes in the case of perfect attribute detectors. This result is in line with other worst-case analyses in the context of weak supervision. In \citet{mazzetto2021semi}, it is noted that while combining the output of different weak supervision sources to obtain a noisy label of a given input item, in the worst-case one cannot do better than just using the most accurate  weak supervision source without additional information. In \Cref{app:approximation-subsection}, we show how to approximate the lower bound in the multiclass setting by using \Cref{binary-adversarial-computation}.

\subsection{Lower Bound is Tight}\label{sec:tight}

In this subsection, we prove that the worst-case lower bound $\eqref{adversarial-quantity}$ is tight. We show  a randomized attribute-class classifier whose expected error is upper bounded by $Q$ with respect to any distribution $p \in \mathcal{P}(\bm{A})$. This classifier can be computed only based on the class-attribute matrix, and it provides an upper bound to the error of the best map from attributes to classes that matches the lower bound.

We consider the family $\mathcal{G}_R$ of all randomized  attribute-class classifiers, where each $g \in \mathcal{G}_R$ is a random map from $\{0,1\}^n$ to $[k]$. A attribute-class classifier in $\mathcal{G}_R$ is described with a right-stochastic matrix $\bm{W} \in [0,1]^{2^n \times k}$, where the rows are indexed by binary vectors $\bm{v} \in \{0,1\}^n$, and the columns are indexed by the classes $j \in [k]$. The entry $W_{\bm{v},j}$ represents the probability of the randomized classifier to output $j$ given that the input is $\bm{v}$. We will use $g_{\bm{W}}$ to denote the randomized classifier in $\mathcal{G}_R$ that is described with the right-stochastic matrix $\bm{W}$. Given a PMF $p$ over $\{0,1\}^n \times [k]$, we define the expected error of $g_{\bm{W}}$ as
\begin{align}
\label{expected-loss-randomized}
    \eps(g_{\bm{W}}, p) &\doteq 1 - \Pr_{(\bm{v},j) \sim p}[ g_{\bm{W}}(\bm{v}) = j] 
    = 1-\sum_{\bm{v} \in \{0,1\}^n} \sum_{j \in [k]} W_{\bm{v},j} \cdot p(\bm{v},j)    \enspace .
\end{align}
We can observe that the definition above extends definition $\eqref{error-distribution-general}$, in fact $\eqref{expected-loss-randomized}$ coincides with $\eqref{error-distribution-general}$ if $g_{\bm{W}}$ is a deterministic classifier, i.e. each row of $\bm{W}$ contains a $1$.
 
\label{subsection:tight}
\begin{theorem}\label{th:adv}
\label{minimax-invertible-theorem}
There exists a randomized attribute-class classifier $g_a \in \mathcal{G}_R$ such that its worst-case expected error is upper bounded by $Q$, i.e.
    $\max_{p \in \mathcal{P}(\bm{A})} \eps(g_a, p) \leq Q$ , where $Q$ is computed as in $\eqref{adversarial-quantity}$. Also,
it holds
       $\max_{p \in \mathcal{P}(\bm{A})} \min_{g \in \mathcal{G}_R}  \eps(g, p) = Q$,
i.e. the lower bound $Q$ also applies to the family of randomized functions $\mathcal{G}_R$.
\end{theorem}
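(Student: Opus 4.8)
### Proof Proposal for Theorem~\ref{th:adv}

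The plan is to recognize the quantity $\max_{p \in \mathcal{P}(\bm{A})} \min_{g \in \mathcal{G}_R} \eps(g,p)$ as the value of a two-player zero-sum game and apply a minimax theorem. First I would observe that $\mathcal{G}_R$ is the set of right-stochastic matrices $\bm{W} \in [0,1]^{2^n \times k}$, which is a convex compact subset of a finite-dimensional Euclidean space, and $\mathcal{P}(\bm{A})$ is also convex (it is the intersection of the probability simplex over $\{0,1\}^n \times [k]$ with the affine constraints \eqref{matrix-constraint} and the balancedness constraints) and compact. By \eqref{expected-loss-randomized}, the payoff $\eps(g_{\bm{W}},p) = 1 - \sum_{\bm{v},j} W_{\bm{v},j}\, p(\bm{v},j)$ is bilinear in $(\bm{W}, p)$, hence in particular continuous, convex in $\bm{W}$ for fixed $p$, and concave (linear) in $p$ for fixed $\bm{W}$. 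Sion's minimax theorem therefore gives
\begin{align*}
    \max_{p \in \mathcal{P}(\bm{A})} \min_{g \in \mathcal{G}_R} \eps(g,p) = \min_{g \in \mathcal{G}_R} \max_{p \in \mathcal{P}(\bm{A})} \eps(g,p) \enspace .
\end{align*}

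Next I would show the left-hand side equals $Q$. For any fixed $p$, the inner minimization $\min_{g \in \mathcal{G}_R} \eps(g,p)$ is a linear program over the stochastic matrices, and its optimum is attained at a vertex, i.e. a deterministic classifier; concretely the optimal $\bm{W}$ puts all mass in each row $\bm{v}$ on $\argmax_{j} p(\bm{v},j)$, so $\min_{g \in \mathcal{G}_R} \eps(g,p) = \min_{g \in \mathcal{G}} \eps(g,p) = Q(p)$ by \eqref{Q-computation}. Taking the max over $p \in \mathcal{P}(\bm{A})$ yields exactly $Q$ from \eqref{adversarial-quantity}. Combined with the minimax identity, this shows $\min_{g \in \mathcal{G}_R} \max_{p \in \mathcal{P}(\bm{A})} \eps(g,p) = Q$, and I would take $g_a$ to be any minimizer of the right-hand side (it exists by compactness of $\mathcal{G}_R$ and continuity); by construction $\max_{p \in \mathcal{P}(\bm{A})} \eps(g_a,p) \leq Q$, which is the first claim. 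The second claim, $\max_{p} \min_{g \in \mathcal{G}_R} \eps(g,p) = Q$, is precisely the left-hand side of the minimax identity, already established.

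The main obstacle, and the only place requiring care, is verifying the hypotheses of the minimax theorem cleanly — in particular that $\mathcal{P}(\bm{A})$ is nonempty (it contains $p^*$, so this is free), convex, and compact. Convexity is immediate since all the defining constraints \eqref{matrix-constraint} and balancedness are affine equalities together with nonnegativity; compactness follows because $\mathcal{P}(\bm{A})$ is a closed subset of the compact probability simplex on the finite set $\{0,1\}^n \times [k]$. One subtlety worth a sentence: the constraint \eqref{matrix-constraint} as written involves the product $A_{j,i} \sum_{\bm{v}} p(\bm{v},j)$, but since balancedness forces $\sum_{\bm{v}} p(\bm{v},j) = 1/k$, on $\mathcal{P}(\bm{A})$ this is the genuinely affine constraint $\sum_{\bm{v}: v_i=1} p(\bm{v},j) = A_{j,i}/k$, so convexity is not in jeopardy. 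With these routine checks in place, everything else is a direct application of the minimax theorem and the LP-vertex argument. An alternative to invoking Sion's theorem would be to use LP duality directly on \eqref{lp-maximum} together with a von Neumann-style argument, but the Sion route is the most transparent.
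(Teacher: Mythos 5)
Your proposal is correct and follows essentially the same route as the paper's proof: both verify that $\mathcal{P}(\bm{A})$ and the set of stochastic matrices are convex and compact, note that the payoff is bilinear, invoke a minimax theorem (the paper cites von Neumann's, you cite Sion's — interchangeable here), and use the LP-vertex observation that the inner minimum over $\mathcal{G}_R$ is attained at a deterministic classifier to identify the game value with $Q$. Your added remark that the constraint \eqref{matrix-constraint} becomes genuinely affine once balancedness fixes the marginals is a correct and worthwhile clarification, but does not change the argument.
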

It is possible to compute the randomized attribute-class classifier $g_a$ that satisfies \cref{minimax-invertible-theorem} solely based on the matrix $\bm{A}$ through Linear Programming using $O(k \cdot 2^n)$ variables and constraints. Due to space constraints, we defer this computation to Appendix~\ref{adversarial-classifier-computation}.

\section{Empirical Applications}\label{sec:experiments}
In this section we compare our novel theory with the performance of popular attribute-based ZSL methods.
Our results quantify a lower bound to the lowest error rate that any attribute-based ZSL algorithm can guarantee based on the information provided by the class-attribute matrix. In practice,
we show that the lower bound is still predictive of the performance and the behaviour of attribute-based ZSL algorithms.
We run two set of experiments.

\begin{compactenum}

\item \textbf{Comparing the lower bound and the empirical error} 
(\cref{sec:exp:adversarial}).
We 
compare the error rates of ZSL models with attributes 
with the lower bound on the error from \cref{sec:compute_bound}.

\item \textbf{Pairwise misclassification prediction} (\cref{sec:exp:misclassification}). 
We measure the predictive power of our lower bounds to
identify pairs of classes that ZSL models are likely to  misclassify. This hardness is measured using the lower bound on the error for a pair of classes (\cref{binary-exact-computation}).

\end{compactenum}
\subsection{Experimental Setup}\label{sec:data}

In this section, we briefly describe the experimental setup. Further details about the datasets and the methods can be found in \cref{app:exp}.
\footnote{Code is available at \url{https://github.com/BatsResearch/mazzetto-neurips22-code}.} 
We choose the following four datasets with attributes that are widely used benchmarks in ZSL: Animals with Attributes 2 (\textbf{AwA2})~\citep{xian2018zero}, aPascal-aYahoo (\textbf{aPY})~\citep{farhadi2009describing}, Caltech-UCSD Birds-200-2011 (\textbf{CUB})~\citep{WahCUB_200_2011}, and SUN attribute database (\textbf{SUN})~\citep{patterson2014sun}.
We focus on classic ZSL algorithms with attributes that 
use at most 
the information in
the class-attribute matrix for the unseen classes: \textbf{DAP}~\citep{Lampert2014AttributeBasedCF}, \textbf{ESZSL}~\citep{RomeraParedes2015AnES}, \textbf{SAE}~\citep{Kodirov2017cvpr}, \textbf{ALE}~\citep{Akata2016}, \textbf{SJE}~\citep{Akata2015cvpr}.
We choose these methods because they use the class-attribute matrix that is the focus of our theoretical analysis.
Many other ZSL methods have been proposed in recent years (see \Cref{sec:related-work}), but their comparison with our lower bound would be vacuous as they often use other source of auxiliary information on the unseen classes, and thus do not fit within our novel theoretical framework.
They are beyond the scope of this first analysis of ZSL.
However, we also run experiments on a more recent attribute-based method \textbf{DAZLE} ~\citep{Huynh-DAZLE:CVPR20} which takes advantage of additional information, i.e., attribute semantic vectors.

\subsection{Comparing Lower Bound and Empirical Error}
\label{sec:exp:adversarial}

In this section, we compare the lower bound presented in~\cref{sec:adv} with the actual error of the ZSL models.
To this end, we run two set of experiments: a first set using the ZSL datasets mentioned in the previous subsection,  and a second set using adversarially generated synthetic data that conform with the class-attribute matrices of those same ZSL datasets.

In the first set of experiments, we follow the standard way to evaluate ZSL models. We train our model on the seen classes, and then compare our lower bound with the empirical error of the ZSL models on the unseen classes.
Since the computation of the lower bound is very expensive for a large number of attributes  (\cref{sec:compute_bound}), we focus on a subsets of them. 
We propose the following greedy strategy to ensure a selection of attributes that are informative with respect to the target classes. Starting with no attributes, we iteratively add the attribute that decreases the most the value of the lower bound, up to $15$ attributes. Due to the large number of seen classes of SUN and CUB, we restrict them to a smaller random subset (see \Cref{app:exp}). 
In the first row of~\cref{fig:err_attr}, we report results for aPY, AwA2, and CUB, due to space constraints. The results for SUN are similar and  in~\Cref{fig:err_attr_SUN} in~\cref{app:exp:res}.

We observe that the value of the lower bound can be significantly lower than the error rate of the ZSL models.
This gap is most probably due to the fact that the learned map from images to attributes does not generalize perfectly to the unseen classes. 
In fact, in this setting we can identify two main source of error for the ZSL models: (1) the arbitrary error due to the domain shift, and (2) the error due to how discriminative is the attribute space to differentiate between the different classes. Our lower bound only addresses the latter, as no method can guarantee a smaller error than the lower bound to map from attributes to classes given only the information of the class-attribute matrix. 
Nonetheless, for CUB and SUN we observe that the empirical error of ZSL models roughly follow the trend of the lower bound. 
This suggests that the lower bound can still be used as a tool to capture how the additional information provided by an attribute leads to improvements of the ZSL models.

\begin{figure*}[t]
    \centering
    \includegraphics[width=.326\columnwidth]{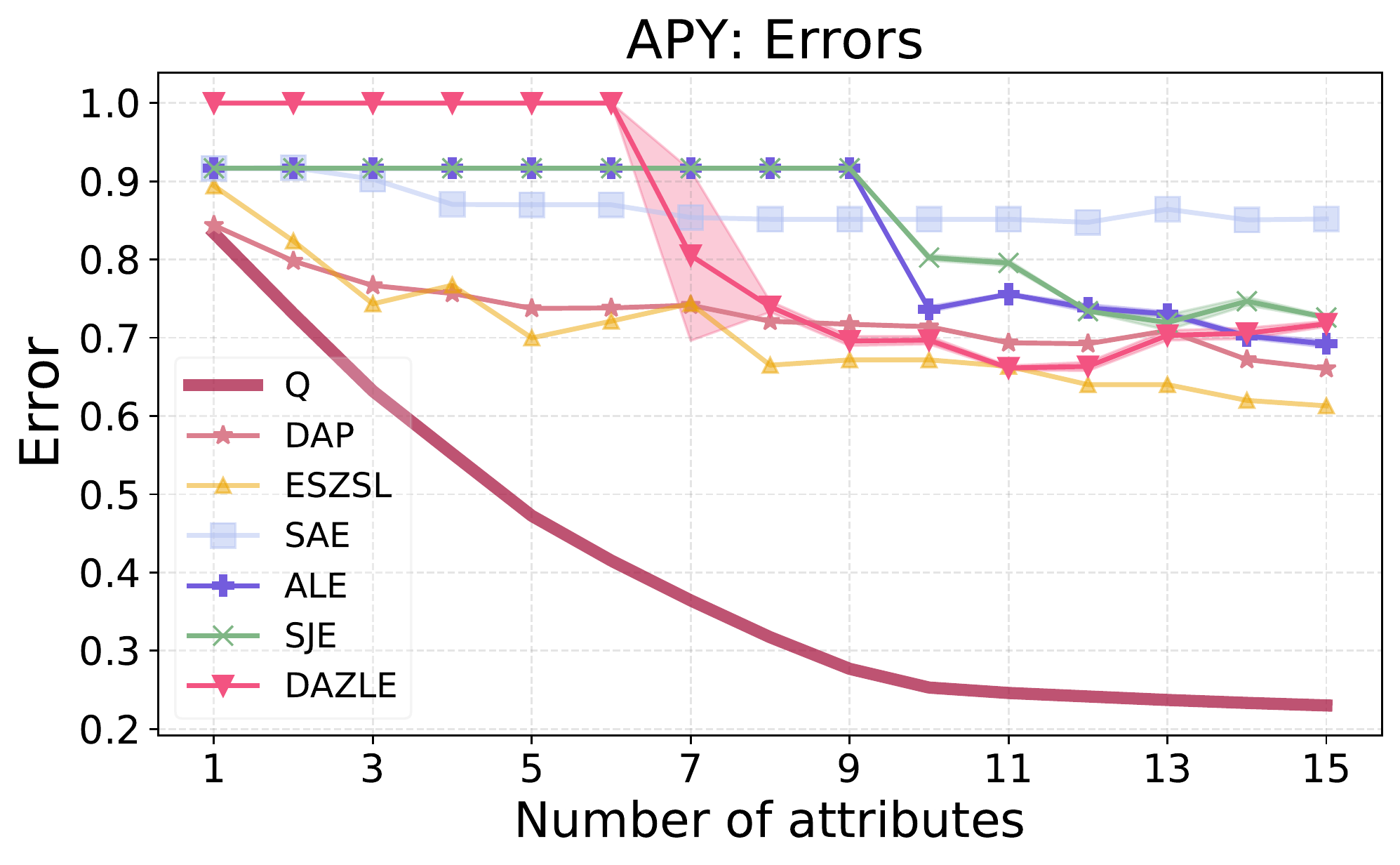}
    \includegraphics[width=.326\columnwidth]{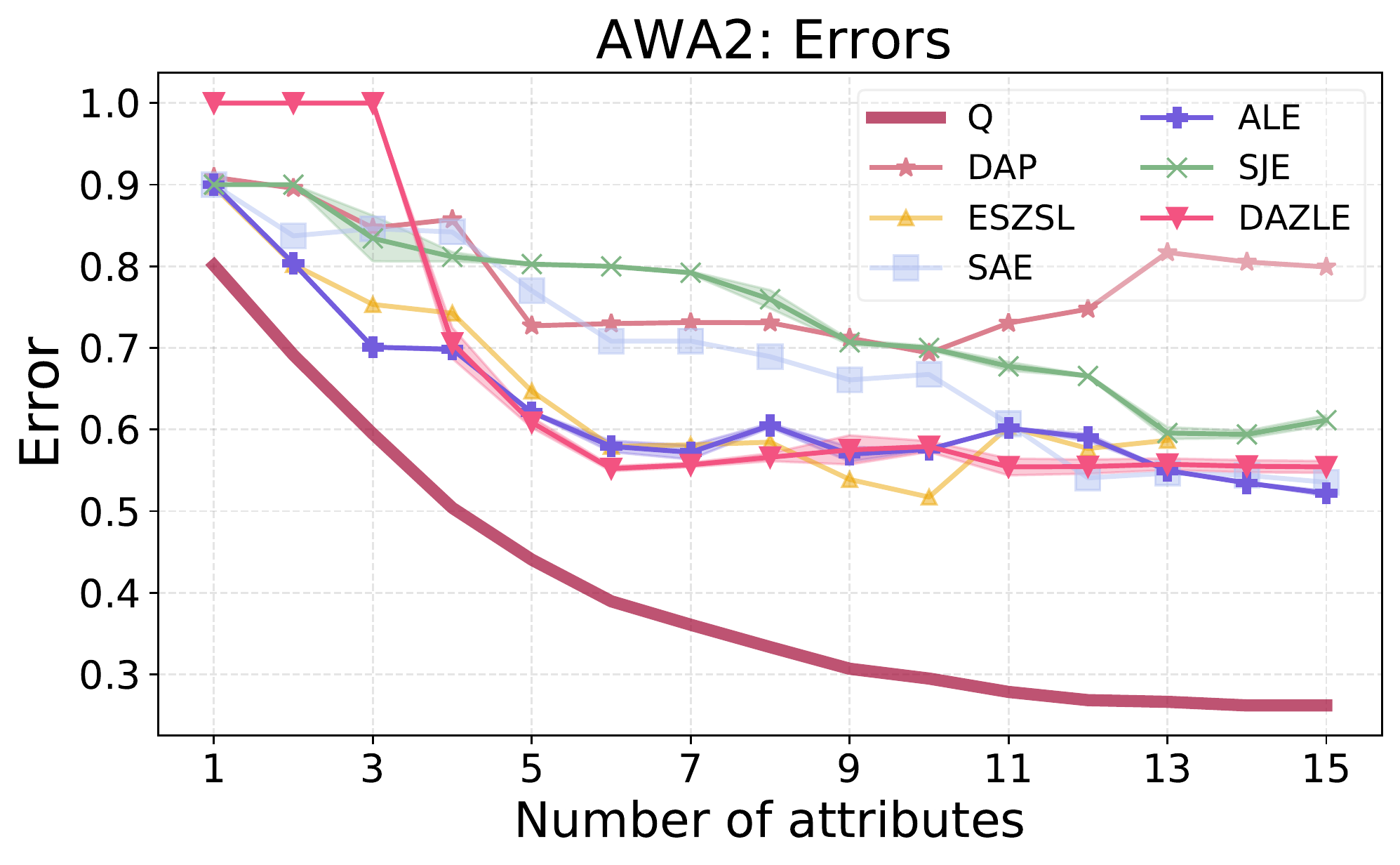}
    \includegraphics[width=.326\columnwidth]{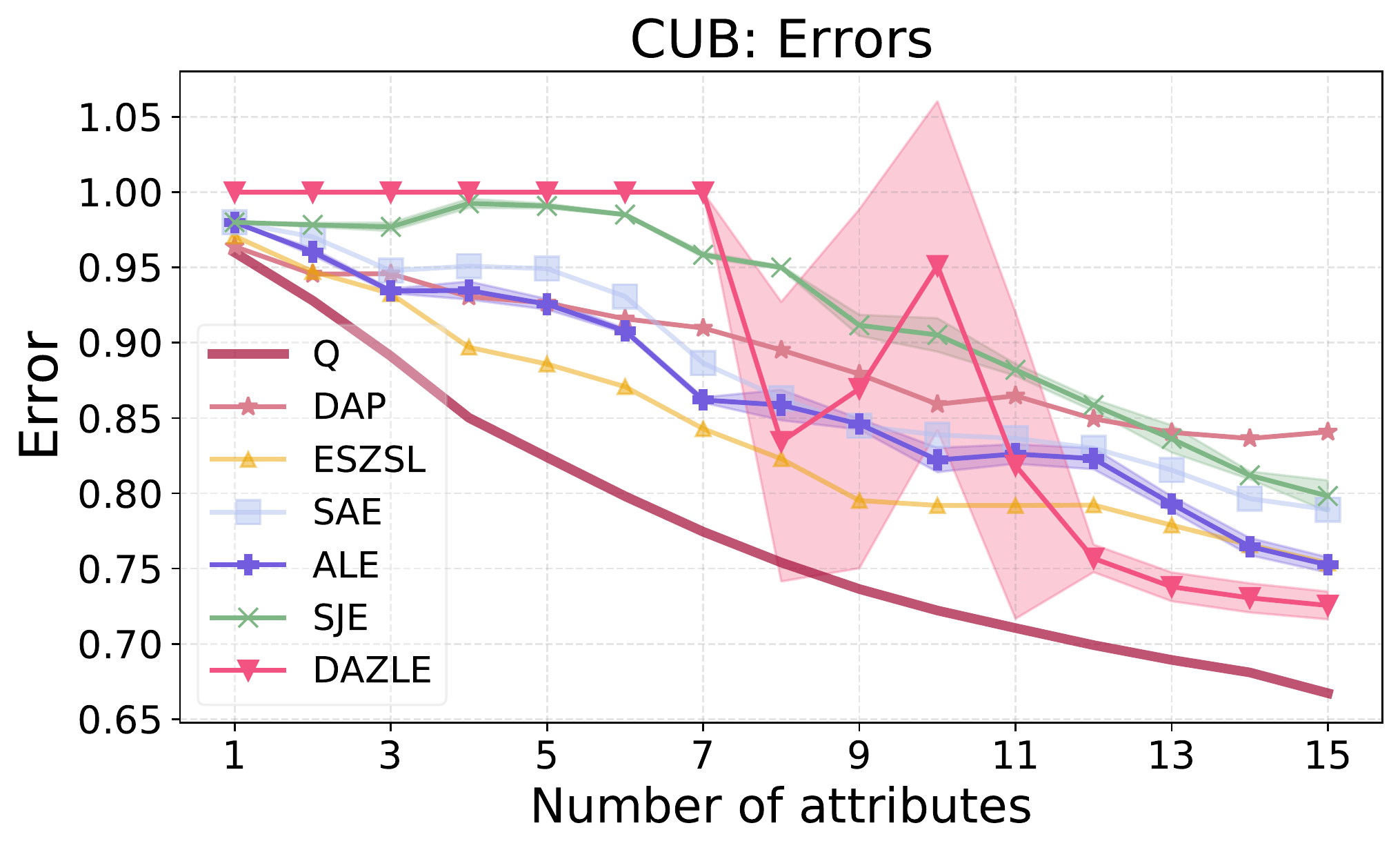}
    \includegraphics[width=.326\columnwidth]{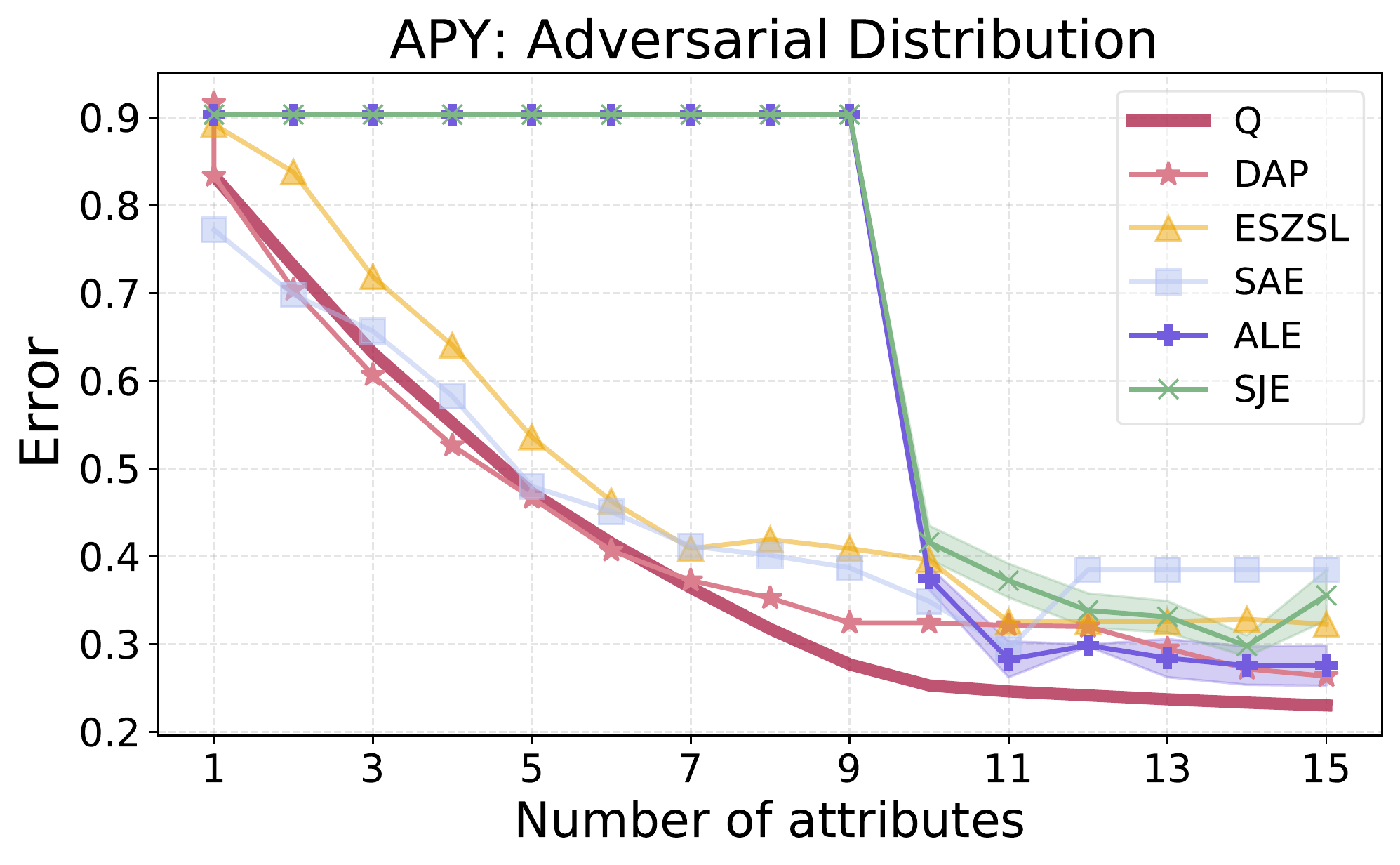}
    \includegraphics[width=.326\columnwidth]{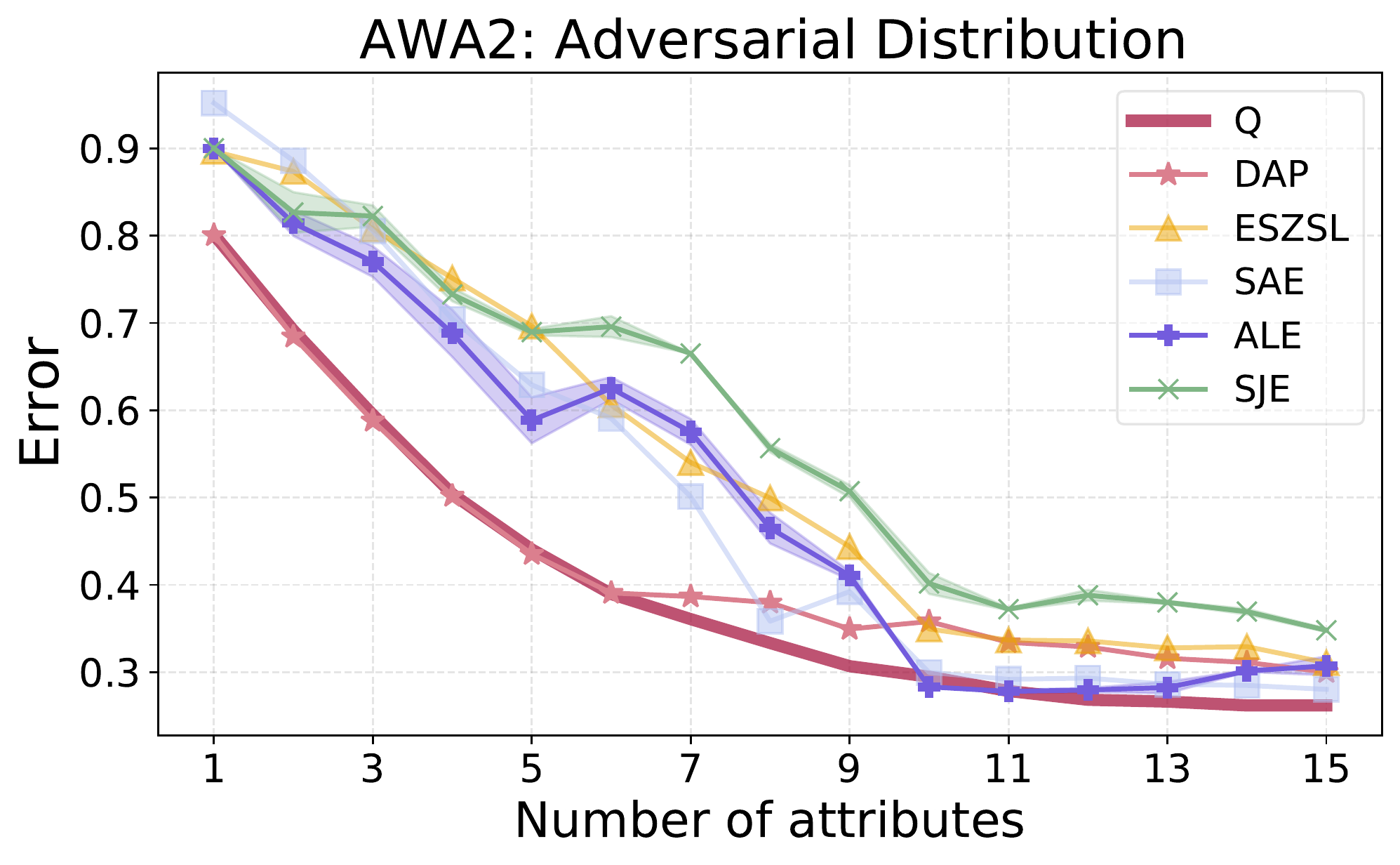}
    \includegraphics[width=.326\columnwidth]{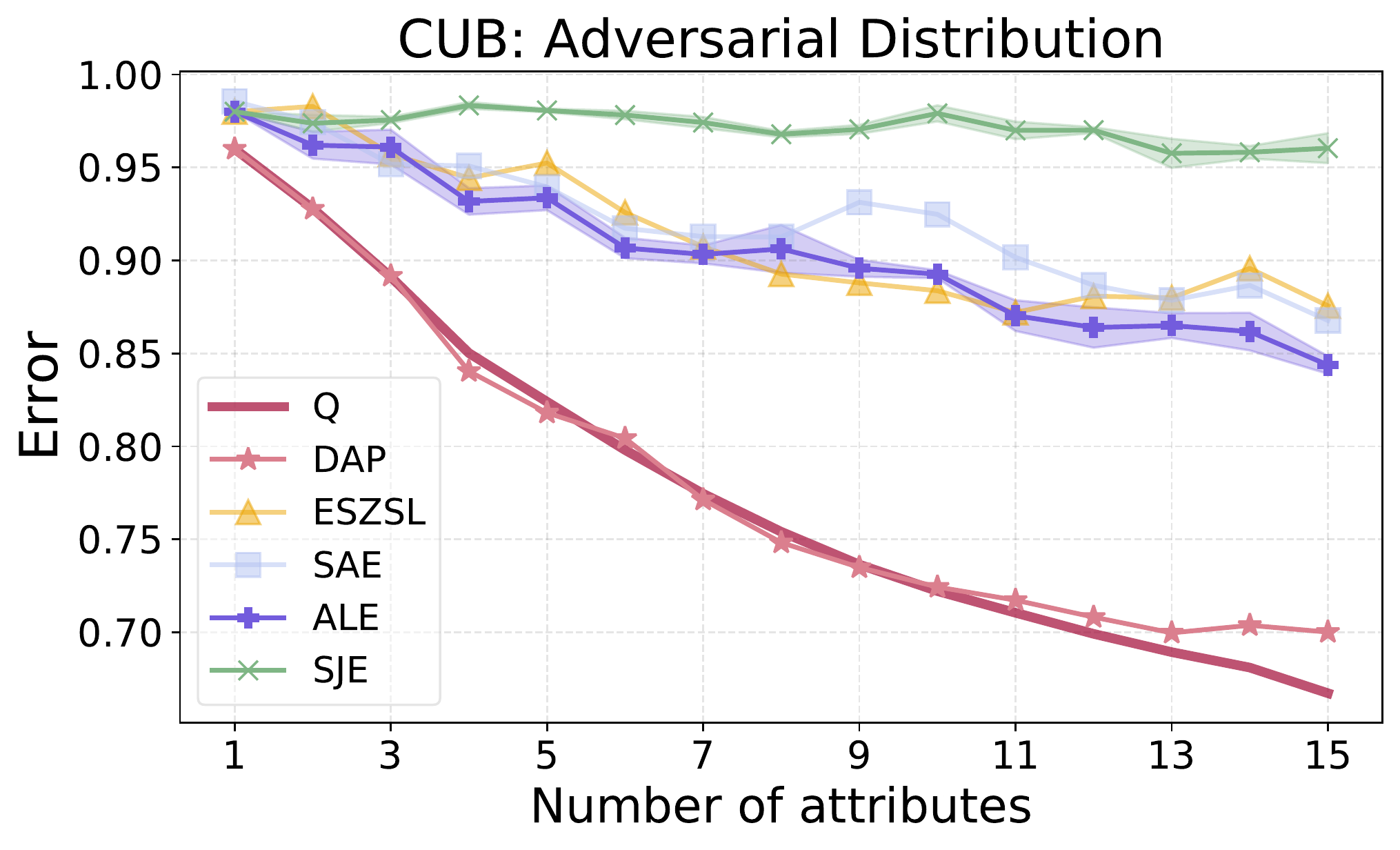}

    \caption{\textbf{Comparison of the lower bound with the empirical error.} We plot the lower bound on the error (\textbf{Q}), and the error of ZSL methods with attributes (\textbf{DAP, ESZSL, SAE, ALE}, and \textbf{DAZLE}).
    The first row reports these values computed on the unseen classes of the aPY, AwA2, and CUB, varying the number of available attributes. 
    The second row reports the values for the adversarially generated synthetic data. 
    The bands indicate the standard errors on five runs with different seeds for randomized methods. 
    These results validate that even in the absence of domain shift, there exists a distribution of the data that satisfy the constraints imposed by the class-attribute matrix for which no method can do better than the lower bound.
    }
    \label{fig:err_attr}
\end{figure*}

In the second set of experiments, we empirically demonstrate our theory by showing that even if we minimize the error due to domain shift, there exists data for which no method can do better than our lower bound. To this end, for each dataset we adversarially generate synthetic data with attribute values satisfying the dataset's class-attribute matrix. Specifically, we use the same class-attribute matrix with $15$ attributes as in the previous set of experiments in order to compute the adversarial distribution $p$ over attributes and classes according to the linear program introduced in \Cref{sec:compute_bound}. The data is generated by sampling attribute-class pairs from this distribution, and using the attribute vector as the feature vector. In order to minimize the error due to domain shift, this distribution is used to generate data for both  training and testing of the ZSL methods, and the same class-attribute matrix is used for both seen and unseen classes. We report additional details on this experimental setup and synthetic data generation in \Cref{app:exp}. 
We report the results of the experiments in the second row of~\cref{fig:err_attr}, iterating over the same attributes greedily selected in the first set of experiments for each dataset.
(For this set of experiments, we do not report results for DAZLE as this method relies on the input items being images, so it does not apply to our synthetic data.)
In this case, the methods are able to achieve errors that are comparable with the lower bound as we minimized the error due to domain shift. This experiment empirically validates that even in the absence of domain shift, there exists a distribution of the data that satisfy the constraints imposed by the class-attribute matrix for which no method can do better than the lower bound. This adversarial distribution represent an intrinsic error gap due to the quality of the information provided by the class-attribute matrix.
This is the first work to quantify such information in ZSL.

\subsection{Pairwise Misclassification Prediction}
\label{sec:exp:misclassification}

\Cref{binary-adversarial-computation} shows how to efficiently compute the lower bound on the error to distinguish between a pair of classes given the class-attribute matrix.
In addition to the overall bound on error, it also gives us fine-grained information about which classes are harder to distinguish among.
We define the \emph{pairwise lower bound error matrix} $\bm{L}$, whose entry $L_{j,j'}$ is the lower bound on the error computed as in~\cref{binary-exact-computation}, for all classes $j,j' \in [k]$, $j \neq j'$. 
A large entry $L_{j,j'}$ between two classes $j\neq j'$ indicates that it is hard (in the worst-case) to distinguish between them.
In this section, we compare the matrix $\bm{L}$ with the classification errors made by the ZSL models discussed in~\Cref{sec:data}. 
In particular, we want to show if the pairwise lower bounds on the errors are predictive of the misclassification errors made by the ZSL models. Specifically, a large lower bound on the error for a pair of classes indicates that a ZSL model would likely confuse one class with the other.
For a given dataset and a ZSL method, we build a \emph{misclassification error matrix} $\bm{M}$. The entry $M_{j,j'}$ is computed as
\begin{align*}
\Pr_{x \sim \mathcal{D}}( h(x) = j \land y(x) = j' | y(x) \in \{j ,j'\}) 
+ \Pr_{x \sim \mathcal{D}}( h(x) = j' \land y(x) = j | y(x) \in \{j ,j'\})
\end{align*}
for all $j,j' \in [k]$, $j \neq j'$, where $h(\cdot)$ is the ZSL model. The entry $M_{j,j'}$ represents the probability of the model to misclassify an item of the class $j$ with the class $j'$ or vice-versa. We estimate $\bm{M}$ using test data of the unseen classes. 

In~\Cref{fig:error_matrices}, we plot $\bm{L}$ together with the misclassification matrices $\bm{M}$ of two ZSL methods: DAP and ESZSL, computed on the unseen classes of aPY.
The pairwise lower bound matrix $\bm{L}$ has large values within multiple groups of semantically similar classes, e.g., animals and transportation means. This is in line with human intuition, as we expect visually similar classes to exhibit similar attributes. Correspondingly, the misclassification matrices of DAP and ESZSL highlight the presence of many 
errors for classes belonging to these groups.  We also note that ZSL models could misclassify other pairs of classes due to other source of errors, such as an inaccurate map from image to attributes. 
We report additional experimental analysis on all datasets in \Cref{app:exp:res}.

\begin{figure*}[t]
    \centering
    \includegraphics[width=.314\columnwidth]{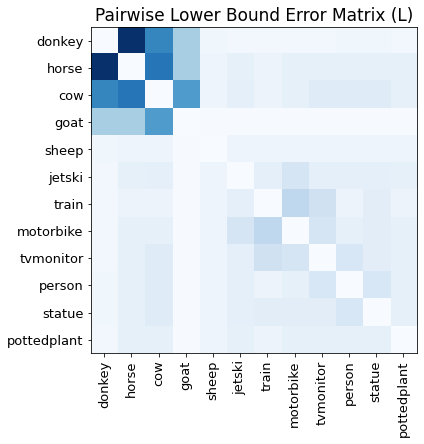}
    \includegraphics[width=.322\columnwidth]{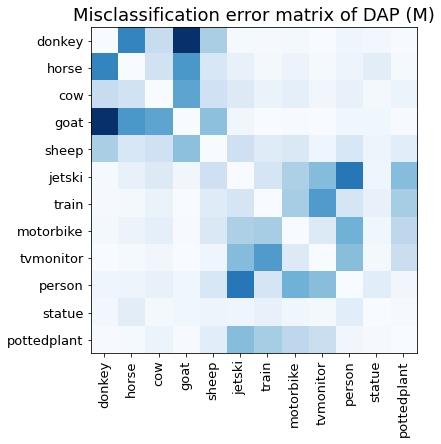}
    \includegraphics[width=.328\columnwidth]{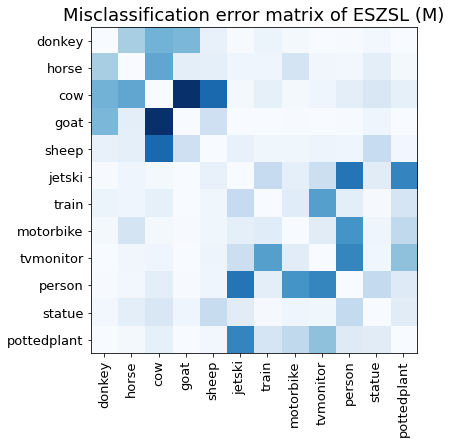}
    \caption{\textbf{Pairwise miscassification matrices.} For the unseen classes of aPY, we plot the pairwise lower bound between pair of classes $\bm{L}$ (\cref{binary-exact-computation}), and the misclassification error matrix $\bm{M}$ of two ZSL models: DAP and ESZSL. 
    Darker squares indicate higher values, and light blue on the diagonal is 0.
    High values of the lower bound indicate classes that are harder (in the worst-case) to distinguish in theory, and high values in $\bm{M}$ indicate pair of classes that are often confused by the ZSL model.
    }
    \label{fig:error_matrices}
\vspace{-1em}
\end{figure*}

\section{Conclusions, Limitations, and Future Work}
We present the first non-trivial lower bound on the best error that an attribute-based ZSL method can guarantee given the information provided---the class attribute matrix. 
While our method is limited to class-attribute matrices, it constitutes a first theoretical building block to quantify the auxiliary information provided in ZSL. In general, theoretical evaluation of the error of ZSL models remains a hard problem due to the arbitrary domain shift between seen and unseen classes, and the wide range of possible auxiliary information used. As a future direction, it remains an open problem to be able to quantify this information for other families of ZSL methods.
However, our analysis readily extends to other variants of ZSL, such as generalized ZSL, where we simply use the class-attribute matrix of the union of both seen and unseen classes while computing our lower bound. \\
%
%
%
%
%
\subsection*{Broader Societal Impacts}
Zero-shot learning is now a popular scenario in research, with potential application to real-world language and vision tasks. 
Worse-case guarantees have long been desired in ZSL.
Any improvement in the rigor of claims about model performance has impact because it demonstrates both what performance can be achieved and that some solutions are invalid. 
However, such bounds do not cover many kinds of error, such as a generalization gap from domain shift or label errors.
Further, it is important that bounds are correctly interpreted such that no false claims or confidences are drawn from our findings.
An educated interpretation of the effect of these bounds upon any particular machine learning application is still required.

\subsection*{Acknowledgements}
We thank Michael Littman and James Tompkin for many helpful and insightful discussions. This material is based on research sponsored by Defense Advanced Research Projects Agency (DARPA) and Air Force Research Laboratory (AFRL) under agreement number FA8750-19-2-1006 and by the National Science Foundation (NSF) under award
IIS-1813444. The U.S. Government is authorized to reproduce and distribute reprints for Governmental purposes notwithstanding any copyright notation thereon. The views and conclusions contained herein are those of the authors and should not be interpreted as necessarily representing the official policies or endorsements, either expressed or implied, of Defense Advanced Research Projects Agency (DARPA) and Air Force Research Laboratory (AFRL) or the U.S. Government. We gratefully acknowledge support from Google and Cisco. Disclosure: Stephen Bach is an advisor to Snorkel AI, a company that provides software and services for weakly supervised machine learning.

\bibliography{neurips_2022}
\bibliographystyle{neurips_2022}  

\newpage

\newpage

\appendix
\newcommand{\p}[2]{p^{(#1)}_{#2}}
\newcommand{\bv}{\bm{v}}
\section{Deferred Proofs.}
\label{deferred-proofs}
\textbf{Proof of \cref{lp-theorem}}.
Let $Q'$ be the optimal value of the LP.

Let $p^* \in \mathcal{P}(\bm{A})$ be a solution of the maximization $\eqref{adversarial-quantity}$. Consider the following assignment of the variables $q_{\bm{v},j} = p^*(\bm{v},j)$ for all $\bm{v} \in \{0,1\}^n$ and $j \in [k]$. Since $p^* \in \mathcal{P}(\bm{A})$, it is straight-forward to verify that the variables $q_{\bm{v},j}$ satisfy constraints $(a)$ and $(b)$ of the LP. Moreover, the objective function is minimized whenever the values $\lambda_{\bm{v}}$ are chosen as small as possible. Due to constraint  $(c)$ of the LP, we have that $\lambda_{\bm{v}} = \max_{j \in [k]} q_{\bm{v},j}$ for each $\bm{v} \in \{0,1\}^n$.
We have that
\begin{align}
\label{step-thm-4.2}
    1-Q = \sum_{\bm{v} \in \{0,1\}^n} \max_{\bm{v} \in \{0,1\}^n} \max_{j \in [k]} p^*(\bm{v},j)=\sum_{\bm{v} \in \{0,1\}^n} \lambda_{\bm{v}} \geq 1-Q'
\end{align}

By contradiction, assume the optimal solution $q^*_{\bm{v},j}$, $\lambda^*_{\bm{v}}$ is such that $1-Q'= \sum_{\bm{v} \in \{0,1\}^n} \lambda_{\bm{v}}^* < 1-Q$. Since $q^*_{\bm{v},j}$, $\lambda^*_{\bm{v}}$ is an optimal solution, due to constraint $(c)$ we have that $\lambda^*_{\bm{v}} = \max_{j \in [k]} q^*_{\bm{v},j}$. Consider a PMF $\tilde{p}$ over $\{0,1\}^n \times [k]$ such that $\tilde{p}(\bm{v},j) = q^*_{\bm{v},j}$. It is easy to verify that $\tilde{p} \in \mathcal{P}(\bm{A})$ due to the constraint $(a)$ and $(b)$. Moreover, we have that $Q(\tilde{p}) = 1 - \sum_{\bm{v}}\max \tilde{p}(\bm{v},j) = 1 - \sum_{\bm{v}} \lambda^*_{\bm{v}} > Q$. This is a contradiction as $\max_{p \in \mathcal{P}(\bm{A})}Q(p) = Q$. Therefore, we have that $1-Q' \geq 1-Q$. Combining the latter inequality with inequality \eqref{step-thm-4.2}, we can conclude that $Q = Q'$.

\qed

\medskip
\textbf{Proof of \cref{binary-adversarial-computation}}.
Without loss of generality, we assume that $\alpha_i \geq \beta_i$ for each $i \in [n]$.  In fact, if $\alpha_i < \beta_i$, then we can consider the attribute function $\psi'_i = 1 - \psi_i$, and the $i$-th column of the matrix $\bm{A}$ would become $(1-\alpha_i, 1-\beta_i)^T$, with $1 - \alpha_i \geq 1-\beta_i$. Also, assume that the attributes are ordered such that $\alpha_1 - \beta_1 \geq \alpha_i - \beta_i$ for each $i \in [n]$.

We first prove the second part of the Theorem. Let $g_a$ be defined as in the problem statement. It is easy to see that for any $p \in \mathcal{P}(\bm{A})$, we have that
\begin{align}
    \eps(g_a,p) &= \Pr_{x \sim \mathcal{D}}( g_a \circ \bpsi(x) \neq y(x)) = \label{greedycomputation}\\
    &=\Pr( \psi_1(x) = 0 | y(x) = 1) \Pr(y(x)=1)+ \Pr( \psi_1(x) = 1 | y(x) = 0) \Pr(y(x)=0) \nonumber \\
    &= \frac{1}{2}(1 - \alpha_1) + \frac{1}{2}\beta_1 \nonumber \\
    &= \frac{1}{2}( 1 - |\beta_1 - \alpha_1|) = Q \nonumber
\end{align}
Since this holds for any $p \in \mathcal{P}(\bm{A})$, we have that
\begin{align*}
    \max_{p \in \mathcal{P}(\bm{A})}\eps(g_a,p) = \frac{1}{2}( 1 - |\beta_1 - \alpha_1|) \enspace .
\end{align*}

Now, we will prove the first part of the Theorem. The proof is by induction. For $i \in [n]$, let $\bm{A}_i$ be the matrix that consists of the first $i$ columns of $\bm{A}$. For $i \in [n]$, let $\mathcal{G}_i$ be the set of all the functions $\{0,1\}^n \rightarrow [2]$.
For $i \in [n]$, let $p^{(i)}$ be a PMF with support over $\{0,1\}^i \times [2]$ such that
\begin{align}
\label{p-i-definition}
    p^{(i)} = \argmax_{p \in \mathcal{P}(\bm{A}_i)} \min_{g \in \mathcal{G}_i} \underbrace{\left( 1 - \sum_{\bm{v} \in \{0,1\}^i}p(\bm{v}, g(\bm{v})) \right)}_{\substack{\doteq \\ \eps^{(i)}(g,p) }}
\end{align}
For ease of notation, for $i \in [n]$, we will denote $\p{i}{\bv,j} = p^{(i)}(\bm{v},j)$ for each $\bm{v} \in \{0,1\}^i$ and $j \in [2]$.

The following auxiliary proposition is crucial to prove the theorem.
\begin{proposition}
\label{property-induction}
Let $i \in [n]$. We have that $\min_{g\in \mathcal{G}_i}\eps^{(i)}(g, p^{(i)}) = Q$ if and only if for each $\bm{v} \in \{0,1\}^{i-1}$, it holds both $\p{i}{1\bv,1} \geq \p{i}{1\bv,2}$ and $\p{i}{0\bv,1} \leq \p{i}{0\bv,2}$.
\begin{proof}
Assume that for each $\bm{v} \in \{0,1\}^{i-1}$, it holds both $\p{i}{1\bv,1} \geq \p{i}{1\bv,2}$ and $\p{i}{0\bv,1} \leq \p{i}{0\bv,2}$. Then, we have that
\begin{align}
    \min_{g \in \mathcal{G}_i}\eps^{(i)}(g,p^{(i)}) &= 1 - \sum_{\bm{v} \in \{0,1\}^i}\max( \p{i}{\bv,1}, \p{i}{\bv,2}) = \nonumber \\
    &=1 - \sum_{\bm{v} \in \{0,1\}^{i-1}}\max( \p{i}{0\bv,1}, \p{i}{0\bv,2}) - \sum_{\bm{v} \in \{0,1\}^{i-1}}\max( \p{i}{1\bv,1}, \p{i}{1\bv,2}) \nonumber \\
    &= 1 - \sum_{\bm{v} \in \{0,1\}^{i-1}} \p{i}{0\bm{v},2} - \sum_{\bm{v} \in \{0,1\}^{i-1}} \p{i}{1\bm{v},1} \nonumber \\
    &= 1 - \frac{1}{2}(1 - \beta_1) + \frac{1}{2}\alpha_1 \label{equality-marginalization} \\
    &= \frac{1}{2}\left( 1 - |\alpha_1 - \beta_1| \right) = Q \nonumber
\end{align}
Equality \eqref{equality-marginalization} is simply obtained by marginalization, since $p^{(i)} \in \mathcal{P}(\bm{A}_i)$, thus $\Pr_{x \sim \mathcal{D}}( \psi_1(x) = 0 \land y(x) = 2) = (1-\beta_1)/2$ and  $\Pr_{x \sim \mathcal{D}}( \psi_1(x) = 1 \land y(x) = 1) = \alpha_1/2$.

Assume that there exists $\bm{v'} \in \{0,1\}^{i-1}$ such that $\p{i}{1\bv',1} < \p{i}{1\bv',2}$ (the case $\p{i}{0\bv',1} > \p{i}{0\bv',2}$ is proven with the same argument). Let $g^{(i)}_a$ be  defined similarly to $g_a$, i.e. $g^{(i)}_a = 1$ if $\psi_1(x) = 1$, and $g^{(i)}_a=2$ otherwise. Following the same computation of $\eqref{greedycomputation}$, we can show that $\eps^{(i)}(g_a,p^{(i)}) = Q$.
Consider the classifier $\tilde{g}$ such that $\tilde{g}(\bm{v}) = g_a(\bm{v})$ for all $\bm{v} \in \{0,1\}^i$ such that $\bm{v} \neq 1\bv'$, and $\tilde{g}(1\bm{v}')=2$.
We have that
\begin{align*}
    \eps^{(i)}(g^{(i)}_a,p^{(i)}) - \eps^{(i)}(\tilde{g},p^{(i)}) = p^{(i)}(\bm{v}', \tilde{g}(\bm{v}')) - p^{(i)}(\bm{v}', g^{(i)}_a(\bm{v})) = p^{(i)}_{1\bm{v}',2} -  p^{(i)}_{1\bm{v}',1} > 0
\end{align*}
Therefore,  $\eps^{(i)}(\tilde{g},p^{(i)}) < \eps^{(i)}(g^{(i)}_a,p^{(i)}) = Q$, which directly implies that $\min_{g\in \mathcal{G}_i}\eps^{(i)}(g, p^{(i)}) < Q$. 
\end{proof}
\end{proposition}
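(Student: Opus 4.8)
The plan is to route both directions of the equivalence through one observation: the classifier $g^{(i)}_a$ that predicts class $1$ exactly when $\fatt{1}(x)=1$ (and class $2$ otherwise) has error precisely $Q$ against \emph{every} $p \in \mathcal{P}(\bm{A}_i)$, not merely against $p^{(i)}$. Granting this, since $\min_{g \in \mathcal{G}_i}\eps^{(i)}(g,p^{(i)})$ is the Bayes error $1 - \sum_{\bv}\max_{j}p^{(i)}(\bv,j)$, it is automatically $\le \eps^{(i)}(g^{(i)}_a,p^{(i)}) = Q$; hence the equality $\min_{g}\eps^{(i)}(g,p^{(i)}) = Q$ holds if and only if $g^{(i)}_a$ is itself a minimizer, i.e.\ a Bayes classifier for $p^{(i)}$. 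The whole proposition then reduces to unpacking "$g^{(i)}_a$ is Bayes-optimal" into the stated coordinatewise inequalities.

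First I would establish the marginal identity $\eps^{(i)}(g^{(i)}_a,p)=Q$ for all $p \in \mathcal{P}(\bm{A}_i)$. The error of $g^{(i)}_a$ depends on $p$ only through the joint marginal of $(\fatt{1},y)$, and this marginal is completely pinned down by the first column of $\bm{A}_i$ together with the balance constraint. Concretely, repeating the computation in \eqref{greedycomputation}, $\eps^{(i)}(g^{(i)}_a,p) = \tfrac12\Pr(\fatt{1}=0\mid y=1) + \tfrac12\Pr(\fatt{1}=1\mid y=2) = \tfrac12(1-\alpha_1)+\tfrac12\beta_1 = \tfrac12(1-(\alpha_1-\beta_1))$. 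Because of the WLOG reductions ($\alpha_i\ge\beta_i$ and attribute $1$ realizing the largest gap), this last quantity equals $Q = \tfrac12(1-\max_i|\beta_i-\alpha_i|)$.

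Next I would convert Bayes-optimality of $g^{(i)}_a$ into the inequalities. A classifier attains the Bayes error if and only if it picks a maximizing class at every input, i.e.\ $p^{(i)}(\bv,g^{(i)}_a(\bv))=\max_j p^{(i)}(\bv,j)$ for all $\bv$. Splitting $\{0,1\}^i$ into vectors $1\bv$ (where $g^{(i)}_a$ outputs $1$) and $0\bv$ (where it outputs $2$) and using $k=2$, this pointwise condition is exactly $\p{i}{1\bv,1}\ge \p{i}{1\bv,2}$ and $\p{i}{0\bv,2}\ge\p{i}{0\bv,1}$ for every $\bv \in \{0,1\}^{i-1}$, which yields the "if" direction directly. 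For "only if" I would give the explicit witness by contraposition, matching the exchange step: if, say, $\p{i}{1\bv',1}<\p{i}{1\bv',2}$ for some $\bv'$, then modifying $g^{(i)}_a$ to output $2$ at the single point $1\bv'$ strictly lowers the error by $\p{i}{1\bv',2}-\p{i}{1\bv',1}>0$, so $\min_g\eps^{(i)}(g,p^{(i)})<Q$ and the equality fails; the symmetric case $\p{i}{0\bv',1}>\p{i}{0\bv',2}$ is identical.

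The only genuinely substantive step is the marginal identity $\eps^{(i)}(g^{(i)}_a,p)=Q$; it is what decouples the $i$-dimensional problem and makes $Q$ simultaneously an achievable upper bound and a sharp threshold for the Bayes error. Everything after it is the routine "pointwise argmax attains the Bayes error" bookkeeping. The one place demanding care is verifying that $\tfrac12(1-\alpha_1)+\tfrac12\beta_1$ coincides with the theorem's $Q$, which relies on the ordering convention making attribute $1$ realize the maximal gap $\max_i|\beta_i-\alpha_i|$.
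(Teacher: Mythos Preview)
Your proposal is correct and follows essentially the same approach as the paper. Both arguments hinge on the marginal identity $\eps^{(i)}(g^{(i)}_a,p)=Q$ for every $p\in\mathcal{P}(\bm{A}_i)$, then resolve the $\max$ pointwise via the stated inequalities, and use the same single-point exchange for the contrapositive; your framing ``$\min_g=Q$ iff $g^{(i)}_a$ is Bayes-optimal'' just packages these steps a bit more uniformly than the paper's separate computations for each direction.
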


By induction, we will prove that for each $i \in [n]$, it is true that $\min_{g \in \mathcal{G}} \eps^{(i)}(g,p^{(i)}) = Q$.

\textbf{Base case}. Let $i=1$. We have that
\begin{align*}
    &\p{1}{1,1} = \frac{\alpha_1}{2}  & \p{1}{0,1} = \frac{1}{2}(1-\alpha_1)\\
    &\p{1}{1,2} =\frac{\beta_1}{2}  & \p{1}{0,2} = \frac{1}{2}(1-\beta_1)\\
\end{align*}
Observe that $p^{(1)} \in \mathcal{P}(\bm{A}_1)$ as the classes are balanced, and we satisfy the constraints of the matrix $\bm{A}$ for the first attribute. It is easy to observe that
\begin{align*}
    \min_{g \in \mathcal{G}} \eps^{(1)}(g,p^{(1)}) = 1 -  \frac{\alpha_1}{2} - \frac{1}{2}(1-\beta_1) = Q
\end{align*}

\textbf{Inductive step}. For $i \in 2, \ldots, n$, assume that $\min_{g \in \mathcal{G}_{i-1}} \eps^{(i-1)}(g,p^{(i-1)}) = Q$, where $p^{(i-1)}$ is solution of $\eqref{p-i-definition}$.
We will show how to construct $p^{(i)}$ from $p^{(i-1)}$ guaranteeing $\min_{g \in \mathcal{G}_i} \eps^{(i)}(g,p^{(i)}) = Q$ and that $p^{(i)} \in \mathcal{P}(\bm{A}_i)$. Observe that in that case, $p^{(i)}$ is also a solution of $\eqref{p-i-definition}$, since the classifier $g_a^{(i)}$ (defined as in the proof of \cref{property-induction}) has error exactly $Q$ with respect to any PMF $p \in \mathcal{P}(\bm{A}_i)$.

Our construction will be divided in three different cases, based on the ordering of the values $\alpha_1, \beta_1$, $\alpha_i$ and $\beta_i$. We will exhibit a different construction of $p^{(i)}$ for each of the case, but they all share the same proof line. In particular, we will guarantee that for each $\bm{v} \in \{0,1\}^{i-1}$ and $j \in [2]$, it holds
\begin{align}
\label{sum-to-previous}
    p^{(i)}_{\bm{v}1,j} + p^{(i)}_{\bm{v}0,j} =  p^{(i-1)}_{\bm{v},j}
\end{align}
This immediately implies that the classes are balanced, in fact, for any $j \in [2]$, we have that
\begin{align*}
    \sum_{\bm{v} \in \{0,1\}^i} p^{(i)}_{\bm{v},j} = \sum_{\bm{v} \in \{0,1\}^{i-1}} p^{(i)}_{\bm{v}0,j}+p^{(i)}_{\bm{v}1,j} = \sum_{\bm{v} \in \{0,1\}^{i-1}}p^{(i-1)}_{\bm{v},j} = \frac{1}{2} \enspace,
\end{align*}
where the last inequality is due to the assumption that $p^{(i-1)} \in \mathcal{P}(\bm{A}_{i-1})$. Moreover, $\eqref{sum-to-previous}$ also implies that $p^{(i)}$ satisfies the constraints imposed by the matrix $\bm{A}$ for the first $i-1$ attributes. In fact, for any $a \in [i-1]$, and $j\in[2]$, we have that
\begin{align*}
    &\sum_{\bm{v} \in \{0,1\}^i : v_{a}=1} \p{i}{\bv,j}= A_{j,a} \sum_{\bm{v} \in \{0,1\}^i} \p{i}{\bv,j} \\
    \iff &\sum_{\bm{v} \in \{0,1\}^{i-1} : v_{a}=1}\left(\p{i}{\bv0,j} + \p{i}{\bv1,j}\right) =  A_{j,a}\sum_{\bm{v} \in \{0,1\}^{i-1}}\left(\p{i}{\bv0,j} + \p{i}{\bv1,j}\right) \\
    \iff & \sum_{\bm{v} \in \{0,1\}^{i-1} : v_{a}=1} \p{i-1}{\bv,j}= A_{j,a} \sum_{\bm{v} \in \{0,1\}^{i-1}} \p{i-1}{\bv,j} \enspace .
\end{align*}
The latter equality is true as $p^{(i-1)} \in \mathcal{P}(\bm{A}_{i-1})$. 

For each different case, we will show that our construction also satisfies the constraints imposed by matrix $\bm{A}$ for attribute $i$. This, together with $\eqref{sum-to-previous}$, implies that our construction guarantees that $p^{(i)} \in \mathcal{P}(\bm{A})$.

Moreover, we will show that with our construction, we also guarantee that for each $\bm{v} \in \{0,1\}^{i-1}$, it holds that
\begin{align}
\label{inequality-to-previous}
    \p{i}{1\bv,1} \geq \p{i}{1\bv,2}\hspace{5pt} \land \hspace{5pt} \p{i}{0\bv,1} \leq \p{i}{0\bv,2} \enspace .
\end{align}
Using \cref{property-induction}, \eqref{inequality-to-previous} immediately implies that $\min_{g\in \mathcal{G}_i}\eps^{(i)}(g, p^{(i)}) = Q$. In order to show that \eqref{inequality-to-previous} holds in our construction, we will use the fact that for each $\bm{v} \in \{0,1\}^{i-2}$, it holds that
\begin{align}
\label{inequality-to-previous-i-1}
    \p{i-1}{1\bv,1} \geq \p{i-1}{1\bv,2}\hspace{5pt} \land \hspace{5pt} \p{i-1}{0\bv,1} \leq \p{i-1}{0\bv,2} \enspace .
\end{align}
This is indeed the case, as by assumption $\min_{g\in \mathcal{G}_{i-1}}\eps^{(i-1)}(g, p^{(i-1)}) = Q$, hence we can apply the other direction of  \cref{property-induction}.

We will now show our construction for the three different cases. For each case, it is straightforward to check that in our construction \eqref{sum-to-previous} holds, and that \eqref{inequality-to-previous-i-1} immediately implies \eqref{inequality-to-previous}. Therefore, we omit those computations.

\textbf{First Case}. [$\beta_1 \geq \beta_i \land \alpha_i \leq \alpha_1$].
We construct $p^{(i)}$ as follows. For each $\bm{v} \in \{0,1\}^{i-2}$,
we let
\begin{align*}
    &\p{i}{1\bv1,2} = \p{i-1}{1\bv,2} \hspace{20pt} & \p{i}{1\bv0,2} = 0 \\
    &\p{i}{1\bv1,1} = \p{i-1}{1\bv,2} + \frac{\alpha_i - \beta_1}{\alpha_1 - \beta_1}\left( \p{i-1}{1\bv,1} - \p{i-1}{1\bv,2} \right) & \\
    & \p{i}{1\bv0,1} = \frac{\alpha_1 - \alpha_i}{\alpha_1 - \beta_1}\left( \p{i-1}{1\bv,1} - \p{i-1}{1\bv,2}\right)  \enspace .
\end{align*}
These probabilities are well defined, as $0 \leq \frac{\alpha_i - \beta_1}{\alpha_1 - \beta_1} \leq 1$ and $\p{i-1}{1\bv,1} \geq \p{i-1}{1\bv,2}$. By construction, we have that $\p{i}{1\bv1,2}+\p{i}{1\bv0,2}=\p{i-1}{1\bv,2}$ and $\p{i}{1\bv1,1}+\p{i}{1\bv0,1} = \p{i-1}{1\bv,1}$, and it is easy to see that $\p{i}{1\bv1,1} \geq  \p{i}{1\bv1,2}$ and $\p{i}{1\bv0,1} \geq \p{i}{1\bv0,2}$.

For each $\bm{v} \in \{0,1\}^{i-2}$, we let
\begin{align*}
    &\p{i}{0\bv0,1} = \p{i-1}{0\bv,1} \hspace{20pt} & \p{i}{0\bv1,1} = 0 \\
    &\p{i}{0\bv0,2} = \p{i-1}{0\bv,1} + \frac{\alpha_1 - \beta_i}{\alpha_1 - \beta_1}\left( \p{i-1}{0\bv,2} - \p{i-1}{0\bv,1} \right) & \\
    & \p{i}{0\bv1,2} = \frac{\beta_i - \beta_1}{\alpha_1 - \beta_1}\left( \p{i-1}{0\bv,2} - \p{i-1}{0\bv,1}\right) 
\end{align*}
Again, by construction we have that $\p{i}{0\bv0,2}+\p{i}{0\bv1,2}=\p{i-1}{0\bv,2}$ and $\p{i}{0\bv0,1}+\p{i}{0\bv1,1} = \p{i-1}{0\bv,1}$, and it is easy to see that $\p{i}{0\bv0,2} \geq  \p{i}{0\bv0,1}$ and $\p{i}{0\bv1,2} \geq \p{i}{0\bv1,1}$.

The PMF $p^{(i)}$ satisfies the constraints imposed by the class-attribute matrix $\bm{A}$ for the attribute $i$, in fact
\begin{align*}
    \sum_{\bm{v} \in \{0,1\}^{i-1}} \p{i}{\bv1,1} = \frac{\beta_1}{2} + \frac{\alpha_i-\beta_1}{\alpha_1-\beta_1}\cdot \frac{1}{2}(\alpha_1 - \beta_1) = \frac{\alpha_i}{2} \\
        \sum_{\bm{v} \in \{0,1\}^{i-1}} \p{i}{\bv1,2} = \frac{\beta_1}{2} + \frac{\beta_i-\beta_1}{\alpha_1-\beta_1}\cdot \frac{1}{2}(\alpha_1 - \beta_1) = \frac{\beta_i}{2}
\end{align*}

\textbf{Second case}. [$\beta_1 \leq \beta_i$ $\land$ $\alpha_1 \leq \alpha_i$].
We construct $p^{(i)}$ as follows. For each $\bv \in \{0,1\}^{i-2}$, let
\begin{align*}
    &\p{i}{1\bv1,1} = \p{i-1}{1\bv,1} \hspace{20pt} & \p{i}{1\bv0,1} = 0 \\
    &\p{i}{1\bv1,2} = \p{i-1}{1\bv,2} \hspace{20pt} & \p{i}{1\bv0,2} = 0 
\end{align*}
and let
\begin{align*}
     &\p{i}{0\bv1,1} = \frac{\alpha_i-\alpha_1}{1-\alpha_1}\p{i-1}{0\bv,1} \hspace{20pt} \\
     &\p{i}{0\bv0,1} = \frac{1-\alpha_1}{1-\alpha_1}\p{i-1}{0\bv,1} \\
     &\p{i}{0\bv1,2} = \frac{\alpha_i-\alpha_1}{1-\alpha_1}\p{i-1}{0\bv,1} + \frac{(\alpha_1 - \beta_1) - (\alpha_i - \beta_i)}{\alpha_1 - \beta_1}
    \left(\p{i-1}{0\bv,2}-\p{i-1}{0\bv,1}\right) \\
    & \p{i}{0\bv0,2} =  \frac{1-\alpha_1}{1-\alpha_1}\p{i-1}{0\bv,1} +\frac{\alpha_i - \beta_i}{\alpha_1 - \beta_1}
    \left(\p{i-1}{0\bv,2}-\p{i-1}{0\bv,1}\right)
\end{align*}
By construction, we can observe that for each $\bm{v} \in \{0,1\}^{i-1}$, it holds that $\p{i}{1\bv,1} \geq \p{i}{1\bv,2}$ and that $\p{i}{0\bv,2} \geq \p{i}{0\bv,1}$. Moreover, for each $\bm{v} \in \{0,1\}^{i}$ and $j \in [2]$, it holds that $\p{i}{\bv1,j} + \p{i}{\bv0,j} = \p{i-1}{\bv,j}$. 

The PMF $p^{(i)}$ satisfies the constraints imposed by the class-attribute matrix $\bm{A}$ for the attribute $i$, in fact
\begin{align*}
        &\sum_{\bm{v} \in \{0,1\}^{i-1}} \p{i}{\bv1,1} = \frac{\alpha_1}{2} + \frac{\alpha_i-\alpha_1}{1-\alpha_1}\cdot \frac{1}{2}(1 - \alpha_1) = \frac{\alpha_i}{2} \\
        &\sum_{\bm{v} \in \{0,1\}^{i-1}} \p{i}{\bv1,2} = \frac{\beta_1}{2} + \frac{\alpha_i-\alpha_1}{1-\alpha_1}\cdot \frac{1}{2}(1 - \alpha_1) + \\
        &+\frac{(\alpha_1 - \beta_1) - (\alpha_i - \beta_i)}{\alpha_1 - \beta_1} \cdot \frac{1}{2}(\alpha_1 - \beta_1) = \frac{\beta_i}{2}
\end{align*}

\textbf{Third case}. [$\beta_i \leq \beta_1$ $\land$ $\alpha_i \leq \alpha_1$]. We construct $p^{(i)}$ as follows. For each $\bv \in \{0,1\}^{i-2}$, let
\begin{align*}
    &\p{i}{0\bv0,1} = \p{i-1}{0\bv,1} \hspace{20pt} & \p{i}{0\bv1,1} = 0 \\
    &\p{i}{0\bv0,2} = \p{i-1}{0\bv,2} \hspace{20pt} & \p{i}{0\bv1,2} = 0 
\end{align*}
and let
\begin{align*}
     &\p{i}{1\bv1,2} = \frac{\beta_i}{\beta_1}\p{i-1}{1\bv,2} \hspace{20pt} \\
     &\p{i}{1\bv0,2} = \frac{\beta_1-\beta_i}{\beta_1}\p{i-1}{1\bv,2} \\
     &\p{i}{1\bv1,1} = \frac{\beta_i}{\beta_1}\p{i-1}{1\bv,2} + \frac{\alpha_i - \beta_i}{\alpha_1 - \beta_1}
    \left(\p{i-1}{1\bv,1}-\p{i-1}{1\bv,2}\right) \\
    & \p{i}{1\bv0,1} = \frac{\beta_1-\beta_i}{\beta_1}\p{i-1}{1\bv,2} +\frac{(\alpha_1 - \beta_1) - (\alpha_i - \beta_i)}{\alpha_1 - \beta_1}
    \left(\p{i-1}{1\bv,1}-\p{i-1}{1\bv,2}\right)
\end{align*}
Again, by construction, we can observe that for each $\bm{v} \in \{0,1\}^{i-1}$, it holds that $\p{i}{1\bv,1} \geq \p{i}{1\bv,2}$ and that $\p{i}{0\bv,2} \geq \p{i}{0\bv,1}$. Moreover, for each $\bm{v} \in \{0,1\}^{i}$ and $j \in [2]$, it holds that $\p{i}{\bv1,j} + \p{i}{\bv0,j} = \p{i-1}{\bv,j}$. 

The PMF $p^{(i)}$ satisfies the constraints imposed by the class-attribute matrix $\bm{A}$ for the attribute $i$, in fact
\begin{align*}
    &\sum_{\bm{v} \in \{0,1\}^{i-1}} \p{i}{\bv1,1} = \frac{\beta_i}{\beta_1}\frac{\beta_1}{2} + \frac{\alpha_i-\beta_i}{\alpha_1-\beta_1} \cdot \frac{1}{2}(\alpha_1-\beta_1) = \frac{\alpha_i}{2} \\
        &\sum_{\bm{v} \in \{0,1\}^{i-1}} \p{i}{\bv1,2} = \frac{\beta_i}{\beta_1} \cdot \frac{1}{2} \beta_1 = \frac{\beta_i}{2}
\end{align*}

We conclude the proof by observing that since $\alpha_1 - \beta_1 \geq \alpha_i - \beta_i$, the case $[\beta_i < \beta_1 \land \alpha_1 < \alpha_i]$ is impossible. \qed

\textbf{Proof of \cref{minimax-invertible-theorem}}. \\
By combining $\eqref{Q-computation}$ and $\eqref{adversarial-quantity}$, we can rewrite $Q$ as
\begin{align*}
    Q = \max_{p \in \mathcal{P}(\bm{A})} \min_{g \in \mathcal{G}} \eps(g,p) \enspace .
\end{align*}

Consider the maximin 
\begin{align}
\label{maximin-randomized}
    Q' = \max_{p \in \mathcal{P}(\bm{A})} \min_{g_{\bm{W}} \in \mathcal{G}_R} \eps(g_{\bm{W}},p) \enspace .
\end{align}
We show that $Q = Q'$. In fact, given $p \in \mathcal{P}(\bm{A})$, it is clear that the expected error \eqref{expected-loss-randomized} of a randomized attribute-class classifier $g_{\bm{W}} \in \mathcal{G}_R$
\begin{align*}
    \eps(g_{\bm{W}}, p)  = 1-\sum_{\bm{v} \in \{0,1\}^n} \sum_{j \in [k]} W_{\bm{v},j} \cdot p(\bm{v},j)   
\end{align*}
is minimized when $\bm{W}_{\bm{v},j'}=1$ if $j' = \argmax_{j \in [k]}p(\bm{v},j)$ for all $\bm{v} \in \{0,1\}^n$, and such a attribute-class classifier is deterministic, i.e. it is equal with probability $1$ to a properly chosen classifier in $\mathcal{G}$. This proves the second part of the Theorem. 

Given $\alpha \in [0,1]$ and $p_1,p_2 \in \mathcal{P}(\bm{A})$, we define $p_\alpha = \alpha p_1 + (1-\alpha)p_2$ as a convex combination of $p_1$ and $p_2$, where for each $\bm{v} \in \{0,1\}^n$ and $j \in [k]$, we have that $p_\alpha(\bm{v},j) = \alpha p_1(\bm{v},j) + (1-\alpha)p_2(\bm{v},j)$. It is easy to verify that $p_{\alpha} \in \mathcal{P}(\bm{A})$. Moreover, for two randomized attribute-class classifiers $g_{\bm{W}}, g_{\bm{W}'}$, and $\alpha \in [0,1]$ we define $g_\alpha = g_{\alpha \bm{W} + (1-\alpha) \bm{W}'} $ as the convex combination of $g_{\bm{W}}$ and $g_{\bm{W}'}$, and observe that $g_{\alpha} \in \mathcal{G}_R$.

The sets $\mathcal{P}(\bm{A})$ and $\mathcal{W}$ are closed and bounded, therefore compact, and we have shown they are also convex. Moreover, the function $\epsilon(\cdot,\cdot)$ is bilinear with respect to $p$ and $\bm{W}$. Therefore, by von Neumann's Minimax Theorem \citep{neumann1928theorie},
the value of the minimax is equal to the value of the maximin, i.e.
\begin{align*}
     \min_{g \in \mathcal{G}_R} \max_{p \in \mathcal{P}(\bm{A})}  \eps(g_{\bm{W}},p) =  \max_{p \in \mathcal{P}(\bm{A})} \min_{g \in \mathcal{G}_R} \eps(g_{\bm{W}},p) = Q
\end{align*}
\qed

\medskip
\section{Adversarial Attribute-Class Classifier Computation}
\label{adversarial-classifier-computation}
In this section of the Appendix, we show how to compute a randomized attribute-class classifier that satisfies \cref{minimax-invertible-theorem}. First, we show that the randomized attribute-class classifier
\begin{align}
\label{minimax-w}
    g_{\bm{W}^*} = \argmin_{g_{\bm{W}} \in \mathcal{G}_R}\max_{p \in \mathcal{P}(\bm{A})} \eps(g_{\bm{W}},p)
\end{align}
satisfies the condition of the Theorem. In fact, as noted in the proof of \cref{minimax-invertible-theorem}, we have that
\begin{align*}
    \min_{g \in \mathcal{G}_R} \max_{p \in \mathcal{P}(\bm{A})} \eps(g_{\bm{W}},p) = \max_{p \in \mathcal{P}(\bm{A})} \min_{g \in \mathcal{G}_R} \eps(g_{\bm{W}},p) = \max_{p \in \mathcal{P}(\bm{A})} \min_{g \in \mathcal{G}} \eps(g,p) = Q
\end{align*}

Now, we show how to compute $\bm{W}^*$. The minimax \eqref{minimax-w} can be written as a bilinear problem. Let $w_{\bm{v},j}$ and $q_{\bm{v},j}$ be variables that denote respectively $W_{\bm{v},j}$ and $p(\bm{v},j)$ for $\bm{v} \in \{0,1\}^n$ and $j \in [k]$. Inspecting \eqref{expected-loss-randomized}, and using the fact that the minimax is equal to the maximin, we can compute \eqref{minimax-w} as
\begin{align}
    \label{maxmin-primal}
    &1 + \max_{\bm{q} \geq 0}\min_{\bm{w} \geq 0}\sum_{\bm{v} \in \{0,1\}^n} \sum_{j \in [k]} (-w_{\bm{v},j}) \cdot q_{\bm{v},j}& s.t.  \\
    &(a)  \sum_{\substack{ \bm{v} \in \{0,1\}^n :\\ v_i = 1}}q_{\bm{v},j} = A_{j,i}P_j  & \forall j \in [k], i \in [n] \nonumber  \\
    &(b)  \sum_{\substack{ \bm{v} \in \{0,1\}^n }}q_{\bm{v},j} = P_j & \forall j \in [k] \nonumber  \\
    &(c)  \sum_{j \in [k]} w_{\bm{v},j} = 1 & \forall \bm{v} \in \{0,1\}^n \nonumber 
\end{align}
We use $P_j$ to denote $P_j = \Pr_{x \sim \mathcal{D}}( y(x) = j)$ for $j \in [k]$, which is equal to $1/k$ when the classes are balanced. We transform the maximin \eqref{maxmin-primal} in a minimization problem that can be easily solved using a Linear Program.

For a given $\bm{q}$, let $\bm{w}^{\bm{q}}$ be an assignment of the variables $\bm{w}$ that  achieves the minimum. We can write the dual of the maximization problem over the variables $\bm{q}$ with respect to the fixed $\bm{w}^{\bm{q}}$ as
\begin{align*}
    &1 + \min_{\substack{\bm{a} \in \mathbb{R}^{k} \\ \bm{b} \in \mathbb{R}^{k \times n} }} \left( \sum_{j \in [k]} P_j \cdot a_j + \sum_{j \in [k]}\sum_{i \in [n]} P_j \cdot M_{j,i} \cdot b_{j,i}\right)& s.t.  \\
    &(a)  \hspace{5pt}  a_{j} + \sum_{\substack{i \in [n] \\ v_i = 1}}b_{j,i} \geq -w^{\bm{q}}_{\bm{v},j}   & \forall \bm{v} \in \{0,1\}^n , j \in [k] \nonumber  
\end{align*}
Due to strong-duality, the optimal value of the dual problem is the same of the primal with respect to the fixed assignment $\bm{w}_q$. By choosing $\bm{w}_q$ as the minimum over all feasible $\bm{w}$, we finally obtain the following minimum problem whose optimal value is equal to \eqref{maxmin-primal}.
\begin{align}
\label{minimini}
    &1 + \min_{\substack{\bm{a} \in \mathbb{R}^{k} \\ \bm{b} \in \mathbb{R}^{k \times n} \\ \bm{w} \geq 0}} \left(\sum_{j \in [k]} P_j \cdot a_j + \sum_{j \in [k]}\sum_{i \in [n]} P_j \cdot M_{j,i} \cdot b_{j,i}\right)& s.t.  \\
    &(a)  \hspace{5pt}a_{j} + \sum_{\substack{i \in [n] \\ v_i = 1}}b_{j,i} \geq -w_{\bm{v},j}   & \forall \bm{v} \in \{0,1\}^n , j \in [k] \nonumber  \\
    &(b) \sum_{j \in [k]} w_{\bm{v},j} = 1 & \forall \bm{v} \in \{0,1\}^n \nonumber 
\end{align}

This minimization problem is easily solved as a Linear Programming with $O(k \cdot 2^n)$ variables and constraints. We choose $\bm{W}^*$ as the optimal solution $\bm{w}^*$ of the minimum \eqref{minimini}.

\section{Approximation of the Lower Bound}
\label{app:approximation-subsection}
In this section of the Appendix, we show a computationally efficient method to compute a lower bound to the value $Q$ in a multiclass classification setting, i.e. $k \geq 2$. We build on the results of Section~\ref{binary-exact-computation}, and we will approximate $Q$ by using~\Cref{binary-adversarial-computation} between properly chosen pairs of the $k$ classes.
Consider a weighted, undirected complete graph $G$. Each vertex of the graph represents a class $j \in [k]$, and the edge $\{j,j'\}$ between classes $j,j' \in [k]$, $j \neq j'$, has weight $w_{\{j,j'\}} = \frac{1}{2}\left( 1 - \max_{i \in [n]}|A_{ji} - A_{j'i}|\right)$ computed as in~\Cref{binary-adversarial-computation}. A matching $M$ is a subset of edges such that no two  edges of $M$ share an endpoint, i.e. for each  $e,e' \in M$, $e \neq e'$, we have that $e \cap e' = \emptyset$. The weight of a matching $M$ is defined as the sum $\sum_{e \in M}w_e$ of the weights of the edges of $M$. The following theorem relates the weight of a matching to the value $Q$.

\begin{theorem}
\label{approximate-Q-matching}
Let $M$ be a matching of $G$, and $Q$ be computed as in \eqref{adversarial-quantity}. Then,
    $Q \geq \frac{2}{k} \sum_{e \in M} w_e$ .
\end{theorem}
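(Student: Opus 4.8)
The plan is to construct, from the matching $M$, a single distribution $p \in \mathcal{P}(\bm{A})$ whose Bayes error $Q(p)$ is at least $\frac{2}{k}\sum_{e\in M}w_e$; since $Q = \max_{p\in\mathcal{P}(\bm{A})}Q(p)$, this immediately yields the claim. The idea is to ``paste together'' the two-class worst-case distributions from \Cref{binary-adversarial-computation}, one for each edge of $M$. Concretely, for each edge $e=\{j,j'\}\in M$, \Cref{binary-adversarial-computation} (applied to the $2\times n$ submatrix of $\bm{A}$ on rows $j,j'$) gives a PMF on $\{0,1\}^n\times\{j,j'\}$ that satisfies the matrix constraints \eqref{matrix-constraint} for those two rows, puts mass $1/2$ on each of the two classes, and on which every attribute-class classifier incurs error exactly $w_e$. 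Rescale this PMF by $\frac{2}{k}$ so that classes $j$ and $j'$ each receive total mass $\frac{1}{k}$. For any class $j$ not covered by $M$, put it on an arbitrary PMF satisfying its own row constraints with total mass $\frac{1}{k}$ (e.g.\ attributes mutually independent with the prescribed marginals). Summing all these pieces over the (disjoint) class labels gives a $p$ with $\sum_{\bm v}p(\bm v,j)=1/k$ for all $j$, satisfying \eqref{matrix-constraint} row by row, hence $p\in\mathcal{P}(\bm{A})$.

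Next I would lower-bound $Q(p)=1-\sum_{\bm v}\max_{j}p(\bm v,j)$. The key step is a ``local'' bound: restrict attention to the contribution of each matched pair. For a fixed $\bm v$, $\max_{j\in[k]}p(\bm v,j)$ is at most $\sum_{j}p(\bm v,j)$ minus, for each edge $e=\{j,j'\}\in M$, the quantity $\min(p(\bm v,j),p(\bm v,j'))$ — because within the pair $\{j,j'\}$ at most one of the two values can be the global argmax, so the smaller one is ``wasted.'' More carefully, one shows $\sum_{\bm v}\max_j p(\bm v,j)\le 1-\sum_{e=\{j,j'\}\in M}\sum_{\bm v}\min(p(\bm v,j),p(\bm v,j'))$, using that the $p(\bm v,j)$ for distinct $j$ are supported on disjoint label coordinates and sum to $1$. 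On the other hand, for each edge $e$, $\sum_{\bm v}\min(p(\bm v,j),p(\bm v,j'))$ equals $\frac{2}{k}$ times the analogous quantity for the rescaled-out two-class distribution, which by the proof of \Cref{binary-adversarial-computation} (equations \eqref{greedycomputation}--\eqref{equality-marginalization}) is exactly the Bayes-error contribution $w_e$ of that two-class problem — i.e.\ $\sum_{\bm v}\min(p(\bm v,j),p(\bm v,j'))=\frac{2}{k}w_e$. Combining, $Q(p)=1-\sum_{\bm v}\max_j p(\bm v,j)\ge \frac{2}{k}\sum_{e\in M}w_e$, and therefore $Q\ge\frac{2}{k}\sum_{e\in M}w_e$.

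The main obstacle is verifying the ``local'' inequality $\sum_{\bm v}\max_j p(\bm v,j)\le 1-\sum_{e\in M}\sum_{\bm v}\min(p(\bm v,j),p(\bm v,j'))$ cleanly, i.e.\ making precise that distinct edges of $M$ contribute ``wasted mass'' on disjoint sets of class labels so the per-edge deficits add rather than interfere; this is exactly where the matching property $e\cap e'=\emptyset$ is used. A secondary point to get right is the bookkeeping for unmatched classes and, if one does not wish to assume balancedness was baked into \Cref{binary-adversarial-computation}, checking that the rescaling by $\frac{2}{k}$ still yields total class mass $1/k$ and preserves the row constraints \eqref{matrix-constraint} (these are homogeneous in the per-class mass, so scaling is harmless). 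Once the local inequality is in hand, the rest is the routine identification of $\sum_{\bm v}\min(p(\bm v,j),p(\bm v,j'))$ with $\frac{2}{k}w_e$ via the computation already carried out in the proof of \Cref{binary-adversarial-computation}.
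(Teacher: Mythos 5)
Your proposal is correct and follows essentially the same route as the paper's proof: paste together the per-edge two-class adversarial distributions from \Cref{binary-adversarial-computation} rescaled by $2/k$ (with arbitrary valid distributions for unmatched classes), verify membership in $\mathcal{P}(\bm{A})$, and lower-bound the Bayes error via $1-\sum_{\bm v}\max_j \tilde p(\bm v,j)\ge\sum_{e\in M}\sum_{\bm v}\min(\tilde p(\bm v,j),\tilde p(\bm v,j'))=\frac{2}{k}\sum_{e\in M}w_e$, using disjointness of the matching edges exactly as the paper does.
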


\begin{proof}
For each edge $\{j,j'\}= e \in M$, consider the matrix $\bm{A}^e$ obtained by selecting the two rows of the classes $j$ and $j'$. Let $p^e$ be the PMF over $\{0,1\}^n \times \{j,j'\}$ that achieves the maximum of \cref{binary-adversarial-computation}. That is, $p^e$ is the adversarial distribution if we were to only distinguish between the two balanced classes $j$ and $j'$ assuming that we need to also satisfy the constraints imposed by $\bm{A}^e$.

Let $\bm{C} = [k] \setminus \left( \cup_{e \in M} M \right)$ be the set of classes that do not belong to any edge of the matching $M$. For any $c \in \mathcal{C}$, we let $p^c$ be an arbitrary PMF over $\{0,1\}^n \times \{c\}$ that satisfies the constraints imposed by the $c$ row of the matrix $\bm{A}$. We give a simple example of such a PMF $p^c$, assuming independence between the attributes. For each $\bm{v} \in \{0,1\}^n$, we let
\begin{align*}
    p^c(\bm{v},c) = \prod_{i \in [n]} A_{c,i}^{v_i} \prod_{i \in [n]} (1-A_{c,i})^{1-v_i} \enspace ,
\end{align*}
and it is easy to verify that this PMF satisfies the constraints imposed by the row $c$ of matrix $\bm{A}$.

Based on the previous PMFs, we define a PMF $\tilde{p} \in \mathcal{P}(\bm{A})$ over $\{0,1\}^n \times [k]$. For each $\bm{v} \in \{0,1\}^n$ and $j \in [k]$, we let
\begin{align*}
    \tilde{p}(\bm{v},j) = \begin{cases}
        \frac{1}{k}p^j(\bm{v},j) \hspace{10pt} \mbox{if } j \in C \\
        \frac{2}{k}p^e(\bm{v},j) \hspace{10pt} \mbox{for } e \in M : j \in e
    \end{cases}
\end{align*}
Observe that this PMF is well defined, as each class is either in $C$ or it belongs to a unique edge in the matching $M$. Moreover, by construction of $\tilde{p}$, the classes are balanced and they satisfy the constraints imposed by matrix $\bm{A}$. 

For each $\{j,j'\} = e \in M$, by construction we have that $1 - \sum_{\bv}\max(p^e(\bv, j),p^e(\bv, j') ) = \sum_{\bv}\min(p^e(\bv, j),p^e(\bv, j') ) = w_e$ (as $p_e$ achieves the maximum of \cref{binary-adversarial-computation}). 
We have that:
\begin{align*}
    Q \geq \min_{g \in \mathcal{G}} \eps(g, \tilde{p}) &= 1 - \sum_{\bm{v}} \max_{j \in [k]} \tilde{p}(\bm{v},j) \\
    &\geq \sum_{\{j,j'\} = e \in M}\sum_{\bm{v}}  \min\left( \tilde{p}(\bm{v},j), \tilde{p}(\bm{v},j')\right) \\
    & = \frac{2}{k} \sum_{\{j,j'\} = e \in M}\sum_{\bm{v}}  \min\left( p^e(\bm{v},j), p^e(\bm{v},j')\right) \\
    & = \frac{2}{k} \sum_{e \in M} w_e
\end{align*}
The first inequality is true because $M$ is a matching, so no two edges of $M$ share an endpoint, and the second equality is due to the definition of $\tilde{p}$.
\end{proof}

In order to maximize the lower bound provided by \cref{approximate-Q-matching}, we want to find a matching of $G$ with maximum weight. This optimization problem can be solved in $O(k^3)$ time by using an optimized version of the blossom algorithm~\citep{edmonds1965paths,lawler2001combinatorial}.

\section{Experimental details}\label{app:exp}

In this section, we provide additional details for the experiments in~\Cref{sec:experiments}. 

\subsection{Data}\label{app:data}

We choose the following four datasets with attributes that are widely used benchmarks in ZSL. 

\textbf{Animals with Attributes 2} (AwA2) consists of 37,322 images of 50 animal classes that are split into 40 seen and 10 unseen classes~\citep{xian2018zero}.
The dataset contains 85 attributes.
We normalize the provided continuous-valued class-attribute matrix, whose entries indicate the strength of the class-attribute association, which we interpret as a probability. We use this matrix as class-attribute matrix.
We use the provided binary class-attribute matrix to infer image-level attribute representation for each image to learn the attribute detectors (\Cref{app:attr_detectors}).

\textbf{aPascal-aYahoo} (aPY) consists of 15,339 images of 32 classes of animals and means of transportation, that are split into 20 seen and 12 unseen classes~\citep{farhadi2009describing}. 
Each image is annotated with 64 attributes. 

\textbf{Caltech-UCSD Birds-200-2011} (CUB) consists of 11,788 images of 200 fine-grained birds classes that are split into 150 seen and 50 unseen classes~\citep{WahCUB_200_2011}. 
Each image is annotated with 312 attributes.

\textbf{SUN attribute database} (SUN) consists of 14,340 images of 717 scenes, e.g., ballroom and auditorium, that are split into 645 seen and 72 unseen classes~\citep{patterson2014sun}. 
Each image is annotated with 102 attributes. 

For each image, both the SUN and CUB datasets provide multiple crowdsourced attribute annotations. 
We average such annotations, and we obtain a  continuous attribute-representation of each images.
For our purposes, i.e., the training of the attribute detectors (\Cref{app:attr_detectors}), we round the value of each attribute.


For all these datasets, we use the split between seen and unseen classes suggested by~\citet{xian2018zero}.
Except for AwA2, we obtain the class-attribute matrices by averaging the attribute representation of the images of each class. This is the same strategy used by~\citet{RomeraParedes2015AnES} in their experiments.
For each dataset, we use a pre-trained ResNet-101~\citep{He2016DeepRL} as an encoder to extract features from the images.
The features are $2048$-dimensional, and they are used as input for the ZSL models.

\textbf{Synthetic Data Generation}. In Section~\ref{sec:exp:adversarial}, we generate adversarial synthetic data based on a input class-attribute matrix $\bm{A} \in [0,1]^{k \times n}$. To this end, we compute the solution of the Linear Program presented in \Cref{sec:compute_bound}. The values of the variables $q_{\bm{v},j}$ that achieve the minimum value of the Linear Program denote an adversarial distribution over the attributes and (balanced) classes that satisfy the constraints imposed by the class-attribute matrix.
We remind that $q_{\bm{v},j}$ denotes the probability that an image has attribute representation $\bm{v}$ and it belongs to class $j$.
We sample classification items from this distribution as follows. We sample a class uniformly at random among the $[k]$ classes, and then we sample a feature vector $\bm{x}$ with probability $k \cdot q_{\bm{x},j}$ with $\bm{x} \in \{0,1\}^n$ (That is, the feature representation is equal to the attribute representation). It is clear that data sampled in this way satisfy the constraints imposed by the class-attribute matrix with attribute functions $\psi_i(\bm{x}) = x_i$ for $i \in [n]$.

\subsection{Learning Attribute Detectors}\label{app:attr_detectors}
For DAP, we need to learn an attribute detector for each attribute. 
The attribute detectors are classifiers that given an image output either $1$ or $0$, if the attribute appears in the image or not, respectively.
We learn the attribute detectors in a supervised fashion on the seen classes, by using the attribute annotations of the images.
In AwA2, the attributes are not explicitly annotated for each image, and we use the discrete attribute description of the image's class. 

\subsection{ZSL models and training details}\label{appendix:zsl}
In our experiments, we compare the lower bound on the error with multiple ZSL methods. 
Here, we provide details about the methods how we train them.

\textbf{DAP}~\citep{Lampert2014AttributeBasedCF}. This method is the first attribute-based method to solve the ZSL problem in the visual domain.
It uses attribute detectors trained on the seen classes, and then uses the class-attribute matrix to infer the a posteriori most-probable unseen class.
DAP unrealistically assumes attribute independence. 
We train attribute detectors as explained in the previous section. As suggested by~\citet{Lampert2014AttributeBasedCF}, we use a uniform prior on the unseen classes. 
The implementation of DAP is based on the code released by~\citet{Lampert2014AttributeBasedCF}\footnote{\url{https://github.com/zhanxyz/Animals_with_Attributes}} under the MIT License\footnote{\url{https://opensource.org/licenses/MIT}}. 

ESZSL~\citep{RomeraParedes2015AnES}, SAE~\citep{Kodirov2017cvpr}, ALE~\citep{Akata2016}, and SJE~\citep{Akata2015cvpr} learn bilinear maps from image features to the the rows of the class-attribute matrix. At training, they use the class-attribute matrix of the seen classes, while for predictions they use the one of the unseen classes. These methods differ in the definition of the learning objective and the optimization method. 
In particular, ESZSL and SAE have closed form solutions.

\textbf{ESZSL.} The hyperparameters of the model are $\alpha$ and $\gamma$, which are the regularizer parameter for feature space and the regularizer parameter for the attributes space, respectively. The parameters $\alpha$ and $\gamma$ for each dataset are set as follows: aPY, $\alpha=3$ and $\gamma=-1$;  AwA2, $\alpha=3$ and $\gamma=0$; CUB, $\alpha=3$ and $\gamma=-1$; SUN, $\alpha=3$ and $\gamma=2$.

\textbf{SAE.} The hyperparameter of the model is $\lambda$ which is a coefficient that controls the trade-off between the decoder and encoder losses. The values of $\lambda$ are set as follows for each dataset: aPY, $\lambda=4$; AwA2, $\lambda=0.2$; CUB, $\lambda=0.2$; SUN, $\lambda=0.16$.

\textbf{ALE.} The hyperparameters of the models are the normalization strategy applied to the class-attribute matrix, and the SGD learning rate $\gamma$. For each dataset, the normalization strategy and the learning rates are: aPY, $\ell_2$ and $0.04$; AwA2, $\ell_2$ and $\gamma=0.01$; CUB: $\ell_2$ and $\gamma=0.3$;  SUN, $\ell_2$ and $\gamma=0.1$.

\textbf{SJE.} The hyperparameters of the model are the normalization strategy applied to the class-attribute matrix, the SGD learning rate $\gamma$, and the margin $m$ for the optimization of the objective. For each dataset we report these parameter, in order: aPY, no normalization, $\gamma=0.01$, and $m=1.5$; AwA2, $\ell_2$, $\gamma=1$, and $m=2.5$; CUB, mean-centering, $\gamma=0.1$, and $m=4$; SUN, mean-centering, $\gamma=1$, and $m=2$.

The chosen hyperparameters maximize the balanced accuracy on the validation classes, and lead to test errors on the unseen classes comparable with the benchmarks by~\citet{xian2018zero}.
The implementations of ESZSL, SAE, ALE, SJE are based on a public code repository\footnote{\url{https://github.com/mvp18/Popular-ZSL-Algorithms}} released under the MIT License.
In~\Cref{tab:acc:zsl_acc}, we report the balanced accuracy of each method when trained on the whole set of attributes and seen classes.
Interestingly, we note that the accuracy of the methods on aPY is comparable to the accuracy that the methods achieve by only using $15$ attributes (\Cref{fig:err_attr}).

The last ZSL method we consider is DAZLE~\citep{Huynh-DAZLE:CVPR20} which (1) uses dense attribute-based attention to find local discriminative regions, and (2) embeds each attribute-based feature with the attribute semantic description. The implementation of DAZLE is based on the code released by~\citet{Huynh-DAZLE:CVPR20}\footnote{\url{https://github.com/hbdat/cvpr20_DAZLE}} under the MIT License. 

\textbf{DAZLE.} The hyperparameters of the model the weight of the the self-calibration loss $\lambda$, the learning rate $\gamma$, the weight decay $w$, and momentum $m$. For each dataset we report these parameter, in order: aPY, $\lambda=0.1$, $\gamma=0.0001$, $w=0.0001$, and $m=0$; AwA2, $\lambda=0.1$, $\gamma=0.0001$, $w=0.0001$, and $m=0$; CUB, $\lambda=0.1$, $\gamma=0.0001$, $w=0.0001$, and $m=0.9$; SUN,$\lambda=0.1$, $\gamma=0.0001$, $w=0.0001$, and $m=0.9$. We used the same setting as in the released implementation of the model.

\textbf{Resources.} We run the experiments on an internal cluster. Most of the methods are executed on CPUs, while for DAZLE we used a GPU NVIDIA GeForce RTX 3090.

\begin{table}[t]
    \centering
    \resizebox{.7\linewidth}{!}{
    \begin{tabular}{lcccc}
    \toprule
    Method &  \multicolumn{1}{c}{aPY} & \multicolumn{1}{c}{AwA2} &  \multicolumn{1}{c}{CUB} & \multicolumn{1}{c}{SUN} \\
    \midrule 
    DAP~\citep{Lampert2014AttributeBasedCF} &  30.32 & 40.44 & 27.99 & 19.65   \\
    ESZSL~\citep{RomeraParedes2015AnES} &  38.56 & 54.82  & 53.95 & 55.69  \\
    SAE~\citep{Kodirov2017cvpr} &  16.49 & 58.89  &  46.71  & 59.86 \\
    ALE~\cite{Akata2016} &  33.52 $\pm$ 0.35 & 52.78 $\pm$ 2.78 & 51.38 $\pm$ 0.77 & 61.69 $\pm$ 0.40\\
    SJE~\citep{Akata2015cvpr} &  31.93 $\pm$ 0.41 & 69.17 $\pm$ 1.89 & 52.23 $\pm$ 0.19 & 52.94 $\pm$ 0.70\\
    DAZLE~\citep{Huynh-DAZLE:CVPR20} &  31.46 $\pm$ 1.52 & 67.57 $\pm$ 1.33 & 57.23 $\pm$ 0.70 & 56.15 $\pm$ 0.57 \\
    \bottomrule
    \end{tabular}}
    \caption{We report the Top-1 balanced average unseen class-accuracy, and standard errors over 5 seeds,  for popular attribute-based ZSL. All the metrics are obtained using the splits proposed in~\citet{xian2018zero}. ESZSL, SAE and DAP do not have intervals because have a closed form solution. For SAE we report results from the semantic to the feature space (\citet{Kodirov2017cvpr}, Section 4.1).}
\label{tab:acc:zsl_acc}
\end{table}



\newpage
\section{Additional Experimental Results}
\label{app:exp:res}

\subsection{Additional Results for \Cref{sec:exp:adversarial}}
In \Cref{fig:err_attr_SUN}, we report the results for the experiments of \Cref{sec:exp:adversarial} for the SUN dataset. The experiments are consistent with our findings. We observe that for the experiments on the SUN data, we still observe that the empirical error of ZSL models roughly follows the trend of the lower bound. This suggests that the lower bound is able to capture how the additional information provided by an attribute leads to improvements of the ZSL models. 
Moreover, for the adversarially generated synthetic data, we observe that no method is able to achieve errors lower than the lower bound, consistently with our theory.

In this subsection, we also report the empirical results for a method that we call \textbf{APA} (Adversarial Predicted Attributes) across all four datasets.
APA is an adversarial algorithm that uses a map from attributes to classes that satisfies~\cref{minimax-invertible-theorem}, computed as in~\cref{adversarial-classifier-computation}.  The method uses attribute detectors trained on the seen classes (\cref{app:attr_detectors}), and predicts the target classes according to the output of those detectors, as specified in \cref{subsection:tight}.
APA is similar to DAP, except in how the attribute detectors are used to predict the target classes. In~\Cref{fig:apa}, we report the result of the experiments for the method APA. We point out that the performance of APA is competitive to the other ZSL approaches, at least using a reduced number of attributes. We remark that this comparison is out of the scope of this paper, but we believe these results open promising direction for further development of adversarial attribute-based zero-shot learning models.

\begin{figure*}[t]
    \centering
    \includegraphics[width=.4\columnwidth]{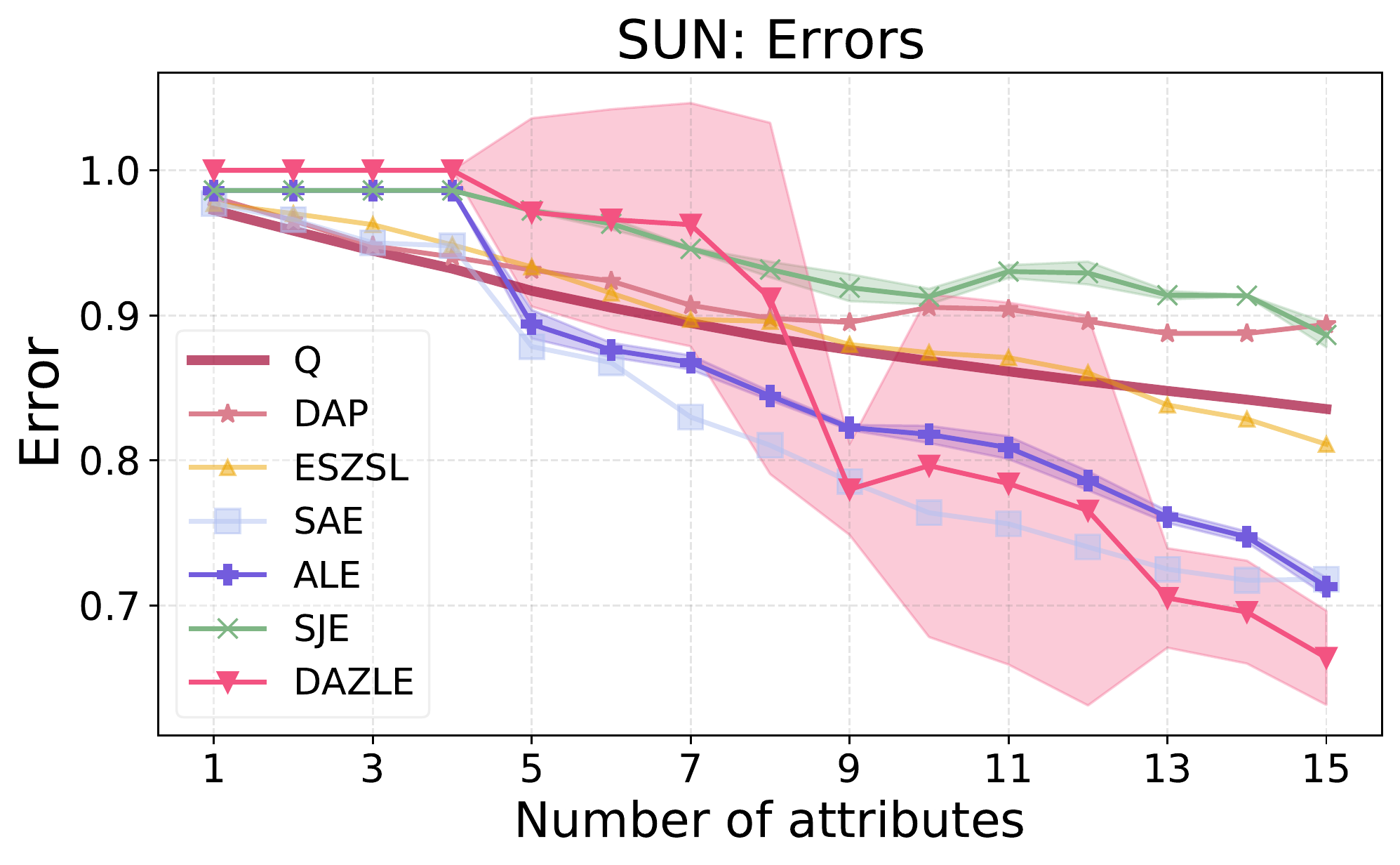}
    \includegraphics[width=.4\columnwidth]{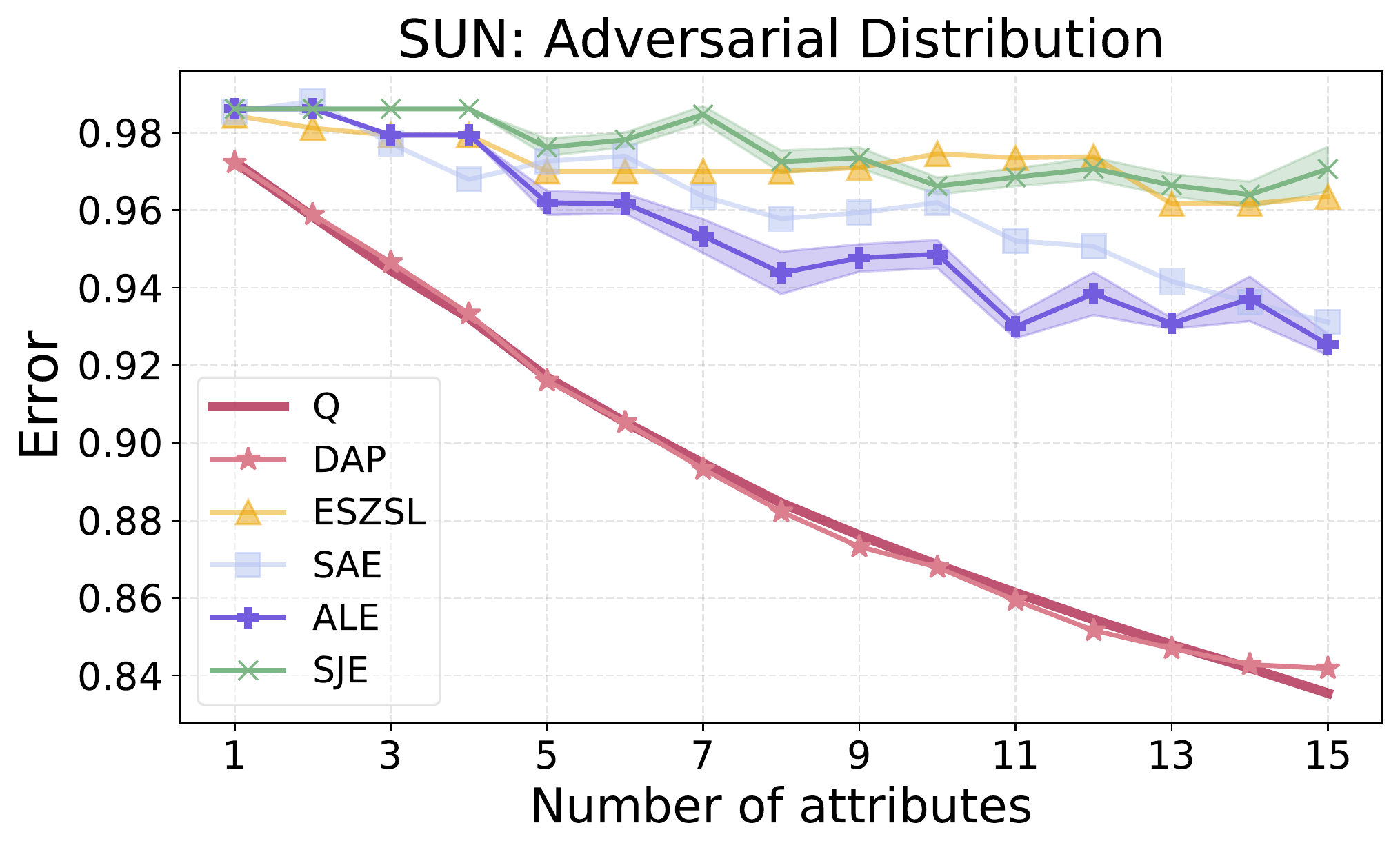}
    \caption{\textbf{Comparison of the lower bound with the empirical error.} We plot the lower bound on the error (\textbf{Q}), and the error of ZSL methods with attributes (\textbf{DAP, ESZSL, SAE, ALE}, and \textbf{DAZLE}).
    The first column reports these values computed on the unseen classes of SUN dataset, varying the number of available attributes. 
    The second column reports the values for the adversarially generated synthetic data. 
    The bands indicate the standard errors on five runs with different seeds for randomized methods.
    }
    \label{fig:err_attr_SUN}
\end{figure*}

\subsection{Additional Results for \Cref{sec:exp:misclassification}}
In this subsection, we extend the experiments of \Cref{sec:exp:misclassification} and we propose a way to analytically quantify the similarity between the pairwise lower bound error matrix $L$ and the misclassification matrices.
Suppose that a model makes $m$ errors $E = \{ (j_1,j'_1), \ldots, (j_m, j'_m) \}$ on the data, where for each $i \in [m]$, the pair $(j_i,j'_i)$ represents an instance where the ZSL model outputs $j_i$ but the true class is $j'_i \neq j_i$. We compute the ratio between the empirical expected weight of the errors $E$ according to the graph $G$ and the expected weight of errors made uniformly at random between the classes, i.e.
    $\left(\frac{1}{m}\sum_{(j,j') \in E} w_{j,j'}\right)/\left(\frac{1}{k(k-1)} \sum_{j \neq j'} w_{j,j'}\right)$.
We name this quantity \emph{skeweness} (Sk),  and we observe that if the ratio is greater than $1$, then the misclassification errors $E$ of the ZSL models are skewed towards pair of classes that have larger values in $\bm{L}$, i.e., they are hard to distinguish.
In~\cref{tab:metrics}, we report the skewness scores computed for all the combination of ZSL models and datasets. We observe that all these quantities are greater than $1$. As noted before, this shows that the errors are skewed towards those indicated by our theoretical analysis. We can observe that the skewness is approximately $1$ for SAE on the aPY dataset. This is not surprising, as the model has very low performance (16.49\% accuracy, see~\cref{tab:acc:zsl_acc} in \cref{app:exp}) on this ZSL task.
We point out that it is very challenging to define a pairwise metric between the entries of $\bm{L}$ and the misclassification matrix $\bm{M}$ to describe their similarity. A pairwise metric would fail to capture more complex relations between classes. For instance, consider the 
scenario where  
three classes
are very hard to distinguish according to the values of their lower bound in $\bm{L}$. A ZSL model could fail to distinguish between them, and it could always output the same class given an image of any of those three classes. This would imply that we will observe zero misclassifications between a pair of these two classes in the matrix $\bm{M}$, which is different from the same entry in $\bm{L}$. Contrarily, the skewness metric is not affected by this problem.

\begin{table}[t]
    \centering
    \resizebox{0.5\columnwidth}{!}{
    \begin{tabular}{lcccc}
    \toprule
    Method &  \multicolumn{1}{c}{aPY} & \multicolumn{1}{c}{AwA2} & \multicolumn{1}{c}{CUB} & \multicolumn{1}{c}{SUN}\\
    \midrule
    DAP &  3.65 $\pm$ 0.04 & 3.44 $\pm$ 0.04 &  2.96 $\pm$ 0.01 &  3.09 $\pm$ 0.00\\
    ESZSL &  3.94 $\pm$ 0.07 & 3.11 $\pm$ 0.03 & 3.38 $\pm$ 0.02 &  3.45 $\pm$ 0.00 \\
    SAE &  1.20 $\pm$ 0.01 &  3.33 $\pm$ 0.02 & 3.65 $\pm$ 0.02 &   3.55 $\pm$ 0.00 \\
    SJE &   5.09 $\pm$ 0.04 &   3.37 $\pm$ 0.03 &  3.32 $\pm$ 0.02 &   3.44 $\pm$ 0.00 \\
    ALE &  4.89 $\pm$ 0.04 &  3.56 $\pm$ 0.02 &  3.68 $\pm$ 0.01 & 3.59 $\pm$ 0.00 \\
    DAZLE &  3.93 $\pm$ 0.05 &  3.07 $\pm$ 0.05 &  3.87 $\pm$ 0.01 & 3.51 $\pm$ 0.00 \\
    \bottomrule
    \end{tabular}}
    \caption{We report for each  model the average \emph{skewness} and standard deviation over class-balanced test sets. A value greater than $1$ indicates that the model's misclassification error is skewed toward pairs of classes with large values in $\bm{L}$. 
    }
\label{tab:metrics}
\end{table}

\textbf{Additonal Details.} As our lower bound is computed assuming balanced classes, we ensure this assumption holds by sampling the test data uniformly among the unseen classes.
In~\cref{tab:metrics}, we report the skewness averaged on 10 different randomly selected subsets of test data, and the respective standard deviation.
Specifically, for each class we sample a number of images equal to the minimum class-size among the unseen classes.

\begin{figure*}[t]
    \centering
    \includegraphics[width=.45\linewidth]{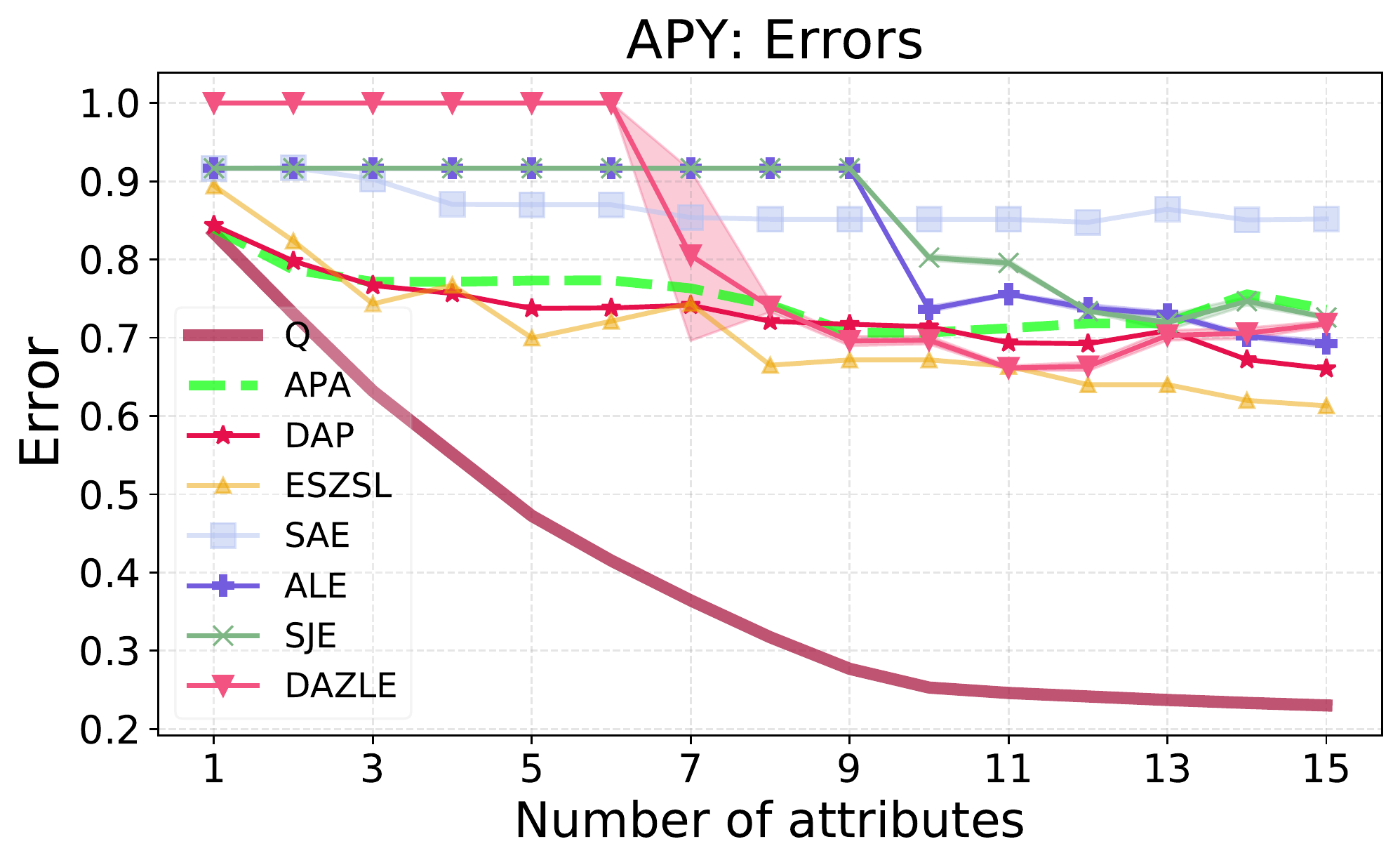}
    \includegraphics[width=.45\linewidth]{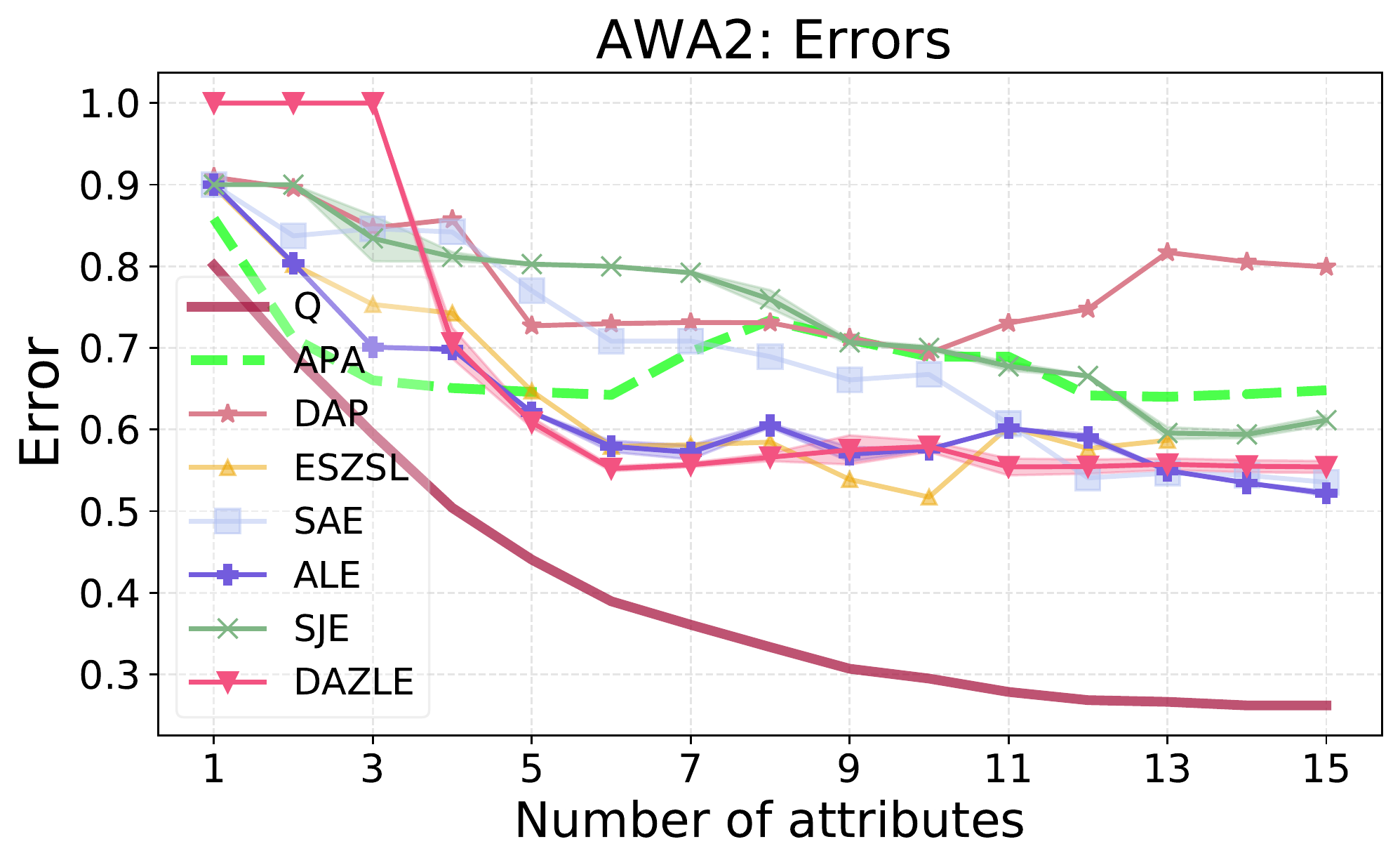}
    \includegraphics[width=.45\linewidth]{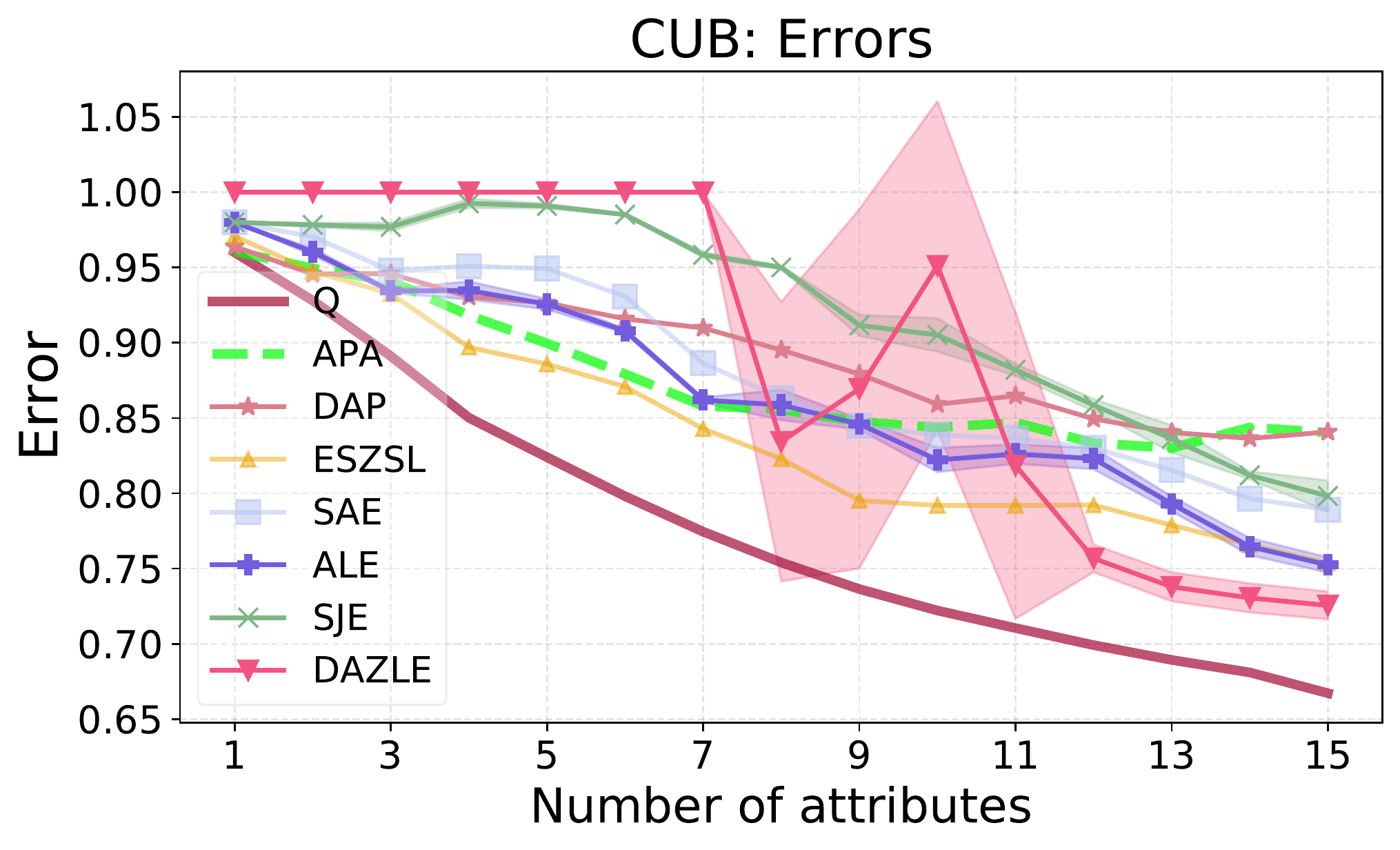}
    \includegraphics[width=.45\linewidth]{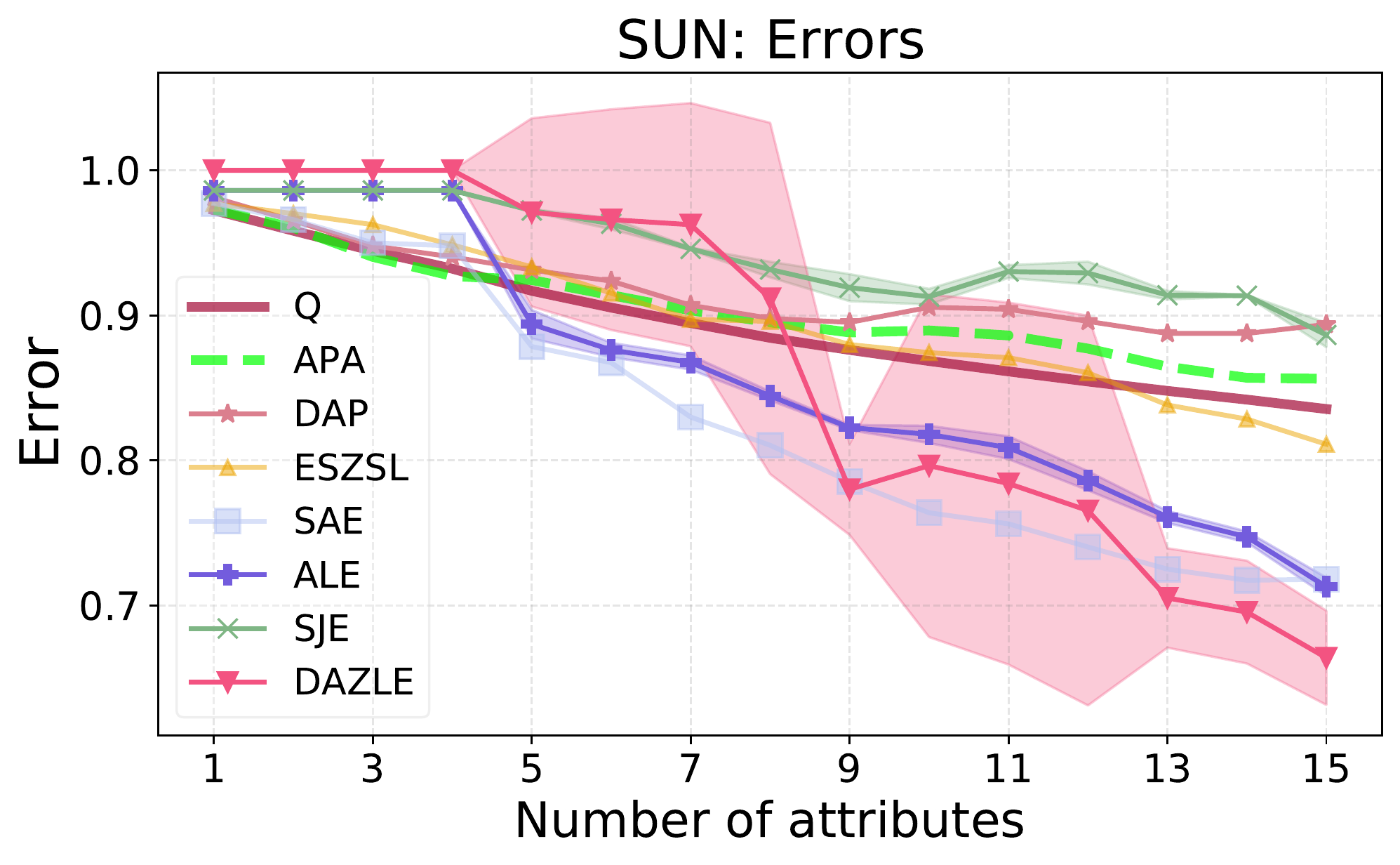}
    \caption{\textbf{Comparison of the lower bound on the error with the ZSL models error on the validation classes.} We plot the lower bound on the error (\textbf{Q}) and compare it to the error rates of the ZSL adversarial algorithm (\textbf{APA}) and the other ZSL models with attribute (\textbf{DAP, ESZSL, SAE, ALE}, and \textbf{DAZLE}). 
    The bands indicate the standard error on five runs with different seeds.}
    \label{fig:apa}
\end{figure*}

\newpage

\section{Extension to Incomplete Class-Attribute Information}

In this paper, we assume that we are provided a class-attribute matrix $A \in [0,1]^{k \times n}$ such that for each $i \in [n]$ and $j \in [k]$, the entry $A_{j,i}$ provides the probability that we observe attribute $i$ (i.e., $\psi_i(x)=1$) given that we sample an element of class $j$ (i.e., $y(x) = j$). Formally, the entries of the class-attribute matrix follow equation \eqref{matrix-class-attribute-relation}
\begin{align*}
    A_{j,i} = \Pr_{x \sim \mathcal{D}}[ \psi_i(x) = 1 | y(x) = j] \enspace .
\end{align*}
In some Zero-Shot Learning problems \citep{jayaraman2014zero,wang2017zero}, we are provided a  \textit{incomplete} class-attribute matrix. That is, for each class, we are provided reliable information only for a subset of the $n$ attributes. Formally, we are given a matrix $A \in ( [0,1]\cup \{*\} )^{k \times n} $, where the symbol $`*'$ denotes a lack of information. That is, if $A_{j,i} = *$, then the probability of observing attribute $i$ given a sample of an element of class $j$ is arbitrary. Conversely, if $A_{j,i} \in [0,1]$, then we are provided the same information considered in the original setting, and we know that the relation between attribute $i$ and class $j$ follows equation \eqref{matrix-class-attribute-relation}.

We can easily extend the lower bound developed in \Cref{sec:adv} for incomplete class-attribute matrices. In fact, in the original formulation in the paper, each entry of the class-attribute matrix defines a constraint on the joint distribution $p$ over classes and attributes. If we are not given a relation between class $j$ and attribute $i$, i.e. $A_{j,i} = *$, then we simply do not specify that constraint. The lower bound can still be computed by using a Linear Program as in \Cref{sec:compute_bound}. In the case of incomplete class-attribute matrix, we specify the constraints $(a)$ of the Linear Program only for pairs of attribute $i$ and class $j$ such that $A_{j,i} \in [0,1]$.

\end{document}